%% file: main.tex
\documentclass{article}
\usepackage[accepted]{configurations/icml2026} 

\usepackage{graphicx}
\usepackage{microtype}        
\usepackage{setspace}         
\usepackage{booktabs}

\usepackage{amssymb}
\usepackage{xcolor}

\usepackage{hyperref}
\hypersetup{
    pdftitle={From Reasoning Traces to Reusable Modules: Understanding Compositional Generalization in Language Model Reasoning}
}
\usepackage{xspace}
\usepackage{graphicx}
\usepackage{subcaption}
\usepackage[compact]{titlesec}
\renewcommand{\paragraph}[1]{\noindent\textbf{#1}}
\usepackage[page]{appendix}
\usepackage{authblk}

\usepackage{enumitem}
\usepackage{soul}
\usepackage{multirow}
\usepackage{xcolor}
\usepackage{tikz}
\usetikzlibrary{bayesnet}
\usetikzlibrary{arrows}
\usepackage{caption}
\usetikzlibrary{backgrounds}

\usepackage{wrapfig}

\usepackage{ifthen}
\makeatletter

\expandafter\let\csname algorithm*\endcsname\relax
\expandafter\let\csname endalgorithm*\endcsname\relax

\makeatother
\usepackage[ruled,vlined,noend,linesnumbered]{algorithm2e}
\SetKwRepeat{Do}{do}{while}
\usetikzlibrary{arrows.meta,positioning,bending}
\usepackage{csquotes}

\usepackage{amsmath,amssymb,amsthm}
\usepackage{thmtools,thm-restate}
\usepackage{mathtools}

\usepackage[capitalize,noabbrev]{cleveref}

\usepackage{multirow} 

\crefname{section}{Sec.}{Secs.}
\Crefname{section}{Section}{Sections}
\Crefname{table}{Table}{Tables}
\crefname{table}{Table}{Tables}
\crefformat{section}{\S#2#1#3}
\crefformat{subsection}{\S#2#1#3}
\crefformat{subsubsection}{\S#2#1#3}

\theoremstyle{plain}
\newtheorem{theorem}{Theorem}[section]

\newtheorem{informalproposition}[theorem]{Informal Proposition}
\crefname{informalproposition}{Proposition}{Propositions}
\Crefname{informalproposition}{Proposition}{Propositions}
\newtheorem{lemma}[theorem]{Lemma}

\theoremstyle{definition}
\newtheorem{definition}[theorem]{Definition}
\theoremstyle{remark}
\newtheorem{remark}[theorem]{Remark}

\usepackage{adjustbox}

\usepackage{bm}
\input{configurations/math_commands}

\providecommand{\selel}{s}

\providecommand{\disl}{\dd}

\providecommand{\disc}{D}

\providecommand{\sele}{S}

\providecommand{\dis}{\mD}

\providecommand{\sel}{\mS}

\providecommand{\Dd}{\Omega}
\providecommand{\Ds}{\Omega_{\mathrm{supp}}}
\providecommand{\Dc}{\Omega_{\mathrm{CP}}}
\providecommand{\Dcomp}{\Omega_{\mathrm{comp}}}

\providecommand{\parents}[1]{\mathrm{Pa}(#1)}
\providecommand{\children}[1]{\mathrm{Ch}(#1)}
\providecommand{\coparents}[1]{\mathrm{CoPa}(#1)}

\providecommand{\supp}[1]{\mathrm{supp}(#1)}

\providecommand{\Lt}{L_{\mathrm{T}}}

\providecommand{\parents}[1]{\mathrm{Pa}(#1)}
\providecommand{\children}[1]{\mathrm{Ch}(#1)}

\usepackage{tcolorbox}
\tcbuselibrary{skins, breakable}

\definecolor{sageframe}{HTML}{3E7C59}
\definecolor{sageTakeaway}{HTML}{EAF4EE}
\definecolor{sageAssumption}{HTML}{F3F9F5}

\newtcolorbox[auto counter]{takeaway}[1][]{
  colback=sageTakeaway,
  colframe=sageframe,
  coltitle=white,
  fonttitle=\bfseries,
  title=Takeaway~\thetcbcounter,
  enhanced,
  boxrule=0.5pt,
  left=1mm,right=1mm,top=1mm,bottom=1mm,
  #1
}

\usepackage{titletoc}

\icmltitlerunning{Compositional Reasoning}

\begin{document}

\twocolumn[
\icmltitle{From Reasoning Traces to Reusable Modules: Understanding Compositional Generalization in Language Model Reasoning}

\icmlsetsymbol{equal}{*}
\begin{icmlauthorlist}
\icmlauthor{Lingjing Kong}{cmu,equal}
\icmlauthor{Xin Liu}{mbzuai,equal}
\icmlauthor{Guangyi Chen}{mbzuai,cmu}
\icmlauthor{Martin Q. Ma}{cmu}
\icmlauthor{Xiangchen Song}{cmu}
\icmlauthor{Yuekai Sun}{ifm,umich}
\icmlauthor{Mikhail Yurochkin}{ifm}
\icmlauthor{Taylor W. Killian}{ifm}
\icmlauthor{Ruslan Salakhutdinov}{cmu}
\icmlauthor{Kun Zhang}{cmu,mbzuai}
\icmlauthor{Eric P. Xing}{cmu,mbzuai,ifm}
\icmlauthor{Zhengzhong Liu}{ifm}
\end{icmlauthorlist}

\icmlaffiliation{cmu}{Carnegie Mellon University}
\icmlaffiliation{mbzuai}{Mohamed bin Zayed University of Artificial Intelligence}
\icmlaffiliation{ifm}{Institute of Foundation Models}
\icmlaffiliation{umich}{University of Michigan}
\icmlcorrespondingauthor{Lingjing Kong}{lingjink@cs.cmu.edu}

\vskip 0.3in
]

\printAffiliationsAndNotice{\icmlEqualContribution}

\input{sections/abstract}

\input{sections/introduction}

\input{sections/formulation}

\input{sections/theory}

\input{sections/experiments}

\input{sections/conclusion}

\clearpage
\input{sections/acknowledgement}
\input{sections/impact_statement}

\bibliographystyle{configurations/icml2026}
\bibliography{references}

\clearpage
\newpage

\appendix

\input{sections/appendix}

\end{document}

%% file: configurations/math_commands.tex
\DeclareMathOperator*{\supp}{supp}

\newcommand{\Pb}[1]{{\mathbb{P}}\left(#1\right) }

\providecommand{\cc}{\mathbf{c}}
\providecommand{\dd}{\mathbf{d}}

\providecommand{\cE}{\mathcal{E}}

\providecommand{\R}{\mathbb{R}} 

\providecommand{\mA}{\mathbf{A}}
\providecommand{\mB}{\mathbf{B}}
\providecommand{\mC}{\mathbf{C}}
\providecommand{\mD}{\mathbf{D}}

\providecommand{\mP}{\mathbf{P}}

\providecommand{\mR}{\mathbf{R}}
\providecommand{\mS}{\mathbf{S}}
\providecommand{\mT}{\mathbf{T}}
\providecommand{\mU}{\mathbf{U}}

\providecommand{\ind}{\perp\!\!\!\!\perp}

\newcommand{\abs}[1]{\left\lvert#1\right\rvert}

%% file: sections/abstract.tex
\begin{abstract}
Post-training pipelines that combine supervised fine-tuning (SFT) with reinforcement learning (RL) have emerged as the key recipe for transforming large language models (LLMs) into robust reasoners. 
We argue that this combined success is driven by \textbf{compositional generalization}, which we formalize through a \textbf{hierarchical latent selection model}. In this framework, reasoning traces are generated by a cascade of discrete latent selection variables corresponding to reusable atomic modules, including both skills (local operations) and routing mechanisms (how intermediate information is selected, reused, and composed). Within this model, we theoretically show that SFT and RL play asymmetric, complementary roles: SFT supplies the raw module materials in compositional traces, and RL decomposes those traces to identify the latent atomic modules and enable compositional generalization.
We design controlled experiments to validate this theory. Our results demonstrate that RL can extract atomic modules from compound traces supplied by SFT and recombine them to solve new configurations. Moreover, we find that training on compound traces yields stronger generalization than training on isolated atomic modules. Finally, we investigate the relationship between SFT and RL data and identify an effective protocol in which SFT ensures coverage of all atomic modules through compositional traces, while RL focuses on novel compositions outside the SFT support to drive exploration.
\end{abstract}

%% file: sections/introduction.tex
\section{Introduction}

\begin{figure*}[t]
\centering
\includegraphics[width=1.0\linewidth]{figs/camera_ready/skill_route_demo_v4.jpg}
\caption{
\textbf{Illustration of atomic skills and routing mechanisms, their compositional traces, and out-of-distribution (OOD) composition evaluation.}
A reasoning trace is generated by composing two kinds of reusable atomic modules.
(i) \emph{Atomic skills} (colored squares; \emph{top-left inventory}) are local operations on intermediate state, e.g., applying a rewrite rule, performing an arithmetic step, or executing a string transformation.
(ii) \emph{Atomic routing mechanisms} (colored arrows, e.g., {\color[HTML]{D6B24A}$R_{1,1}$}, {\color[HTML]{B48CBF}$R_{2,i}$}, {\color[HTML]{C9826A}$R_{3,j}$}, {\color[HTML]{9B352E}$R_{4,k}$}, $\ldots$; \emph{middle-left inventory}) determine how intermediate information is selected and forwarded between skills (e.g., consume the previous output, reuse an earlier intermediate, skip, or branch); the subscript $R_{i,j}$ indexes a routing module by its position $i$ in the chain and an option label $j$.
A \emph{composition of length $L$} is a configuration that fills $L$ skill slots and specifies the routing pattern that wires them.
\textbf{Top row.} Three \emph{skill compositions} (depths $L\!=\!3,3,4$) vary which atomic skills occupy the slots while holding the routing template fixed, exhibiting skill recombination.
\textbf{Middle row.} Three \emph{routing-mechanism compositions} (depths $L\!=\!3,3,4$) reuse the same skill inventory while varying the arrow pattern that wires them, exhibiting routing as a separate compositional axis.
\textbf{Bottom row.} Our OOD composition evaluation: a model is post-trained (SFT and/or RL) on compositions at $L_{\text{train}}\!=\!3$ and evaluated on longer or novel compositions at $L\!=\!4$ whose skill--routing combinations are absent from training.
The figure motivates the central message: compositional generalization requires post-training to identify skills and routing mechanisms as \emph{separate} reusable atoms so they can recombine on unseen problem descriptors.
}
\vspace{-0.4cm}
\label{fig:intro_demo}
\end{figure*}

Post-training pipelines that pair supervised fine-tuning (SFT) with reinforcement learning (RL) are widely credited with the recent step change in reasoning performance of modern large language models (LLMs)~\cite{ElKishky2024OpenAIOS,DeepSeekAI2025DeepSeekR1IR,GDM2025Gemini2P}. Empirically, SFT alone often imitates a small set of canonical \enquote{golden} traces and degrades when familiar reasoning steps must be recombined in unfamiliar ways, whereas SFT followed by RL handles such out-of-distribution (OOD) compositions far more reliably~\cite{zhang2025interplay}. However, the mechanism underlying this combined success remains unclear: what does each stage contribute, and what data should each be paired with? We address both questions through a latent-variable account of compositional reasoning.

Prior work has primarily approached the generalization behavior of RL through empirical evaluation. One line of work tracks improvements from RL using standard reasoning benchmarks (e.g., \emph{pass@k} on math and code)~\cite{yue2025does,wen2025reinforcement,yeo2025demystifying}. Another line constructs controlled synthetic reasoning tasks, spanning mathematics~\cite{zhang2025interplay}, algorithmic coding~\cite{sun2025rl}, string transformation~\cite{yuan2025llms}, and spurious-reward settings~\cite{shao2025spurious}, to isolate behavioral effects under known ground-truth structure. (See Appendices~\ref{sec:related_work} and~\ref{app:discuss} for additional related work and detailed comparisons.)
While these studies establish that RL improves OOD accuracy, they leave open \emph{how} RL reshapes reasoning traces to enable \emph{compositional} generalization.

We propose a latent-variable explanation. We model a reasoning trace as the output of a \textbf{hierarchical latent selection model} (\Cref{sec:formulation}), in which a problem descriptor, the abstract specification of the task encoded in its problem statement, induces a hierarchy of discrete selection variables that choose reusable atomic modules (Figure~\ref{fig:intro_demo}). We distinguish \textbf{skills} (local operations) from \textbf{routing mechanisms} (how intermediate results are composed). Under this view, SFT-curated traces \emph{supply the raw module material} but leave the modules statistically entangled, because skills and routing co-occur almost deterministically in canonical demonstrations. RL generates trajectory variation under reward that \emph{decomposes} those compound traces into reusable atomic modules.

In \Cref{sec:theory} we prove that, under mild conditions, the latent atomic modules and their dependency structure are identifiable from the observable distribution over traces (\Cref{thm:identification}). Once identified, these modules recombine on novel compositions whose required interfaces have been locally witnessed in training (\Cref{thm:generalization}).

Guided by the theory, we run controlled interventions on the synthetic string transformation tasks following~\cite{yuan2025llms}, where we use transformation functions as atomic skills and define routing mechanisms as input structure (\Cref{sec:experiments}). 
We find that RL discovers \emph{atomic modules} from \emph{compound traces}, recombining them to solve novel tasks. Specifically, RL on compound traces matches the atomic-task accuracy of direct atomic supervision while also generalizing to unseen compositions (\Cref{sec:finding1} and \cref{sec:finding2}).
We also find that composability must be learned from \emph{composed} experience: withholding an atom (skill or router) from RL can only be repaired by re-injecting compositions that use it, not by isolated atom-only data (\Cref{sec:finding3}).
Finally, the strongest OOD performance occurs when SFT covers the atomic inventory while RL focuses on novel compositions \emph{beyond} the SFT support, with minimal SFT--RL overlap (\Cref{sec:finding4}). The contributions of this work are:
\begin{enumerate}
\setlength{\itemsep}{0pt}
  \setlength{\topsep}{0pt}
    \item \textbf{Hierarchical Latent Selection Model of reasoning.} We introduce a latent-variable model that separates atomic \emph{skills} from \emph{routing mechanisms} and frames post-training as latent structure identification that enables more flexible reuse and recombination.

    \item \textbf{Theory of identification and composition.} We give sufficient conditions under which the latent selection hierarchy is provably identifiable from observed traces (\Cref{thm:identification}) and show how identified local compatibility relations compose to guarantee modular compositional generalization via local witnesses (\Cref{thm:generalization}). A support analysis (\Cref{prop:sft_hidden_support,prop:rlvr_enrichment}) helps clarify the SFT/RL division of labor.

    \item \textbf{Controlled evidence and data-design implications.} Through systematic interventions on synthetic string-transformation tasks, we verify that RL decomposes compound traces supplied by SFT into reusable atoms, characterize when compound traces and trajectory diversity matter most, and derive practical guidance for designing SFT/RL curricula: SFT ensures coverage of all atomic modules through compositional traces, while RL focuses on novel compositions outside the SFT support (\Cref{sec:finding4}). Open-source SFT/RL model pairs show the same signature, with increased recombination of abstracted step-level skills after RL (\Cref{sec:exp_real_models}).
\end{enumerate}

%% file: sections/formulation.tex
\section{Reasoning as a Hierarchical Latent Selection Model}
\label{sec:formulation}

We model reasoning as a structured generation process that repeatedly \emph{selects} reusable atomic modules.
Intuitively, solving a problem typically involves: (i) choosing a high-level plan, (ii) deciding \emph{what intermediate information to use next} (e.g., reuse the previous result, recall a definition, branch on a condition), and (iii) applying a local operation (e.g., add, substitute, compare).
We capture these choices with a \textbf{hierarchical latent selection model}, in which discrete latent variables select atomic modules at multiple levels and ultimately generate an observable reasoning trace.

\subsection{The Hierarchical Latent Selection Model}

\paragraph{Observed problem descriptor $\mP$.}
Each instance comes with a natural-language problem statement.
For analysis, we view this statement as specifying an \emph{abstract descriptor} $\mP$ that encodes the highest-level constraints of the task (e.g., what must be computed or proven, 
which domain constraints apply).
We condition on this descriptor throughout.

\paragraph{Latent selection variables $\mS$.}
Below $\mP$ lies a hierarchy of discrete latent variables $\mS = [\mS_{1}, \dots, \mS_{L}]$.
A realization of $\mS$ specifies \emph{which atomic modules are used} and \emph{how they are composed}.
For example, for a symbolic manipulation task, a high-level selection might correspond to choosing a solution strategy (e.g., substitution vs. factoring), while deeper selections correspond to choosing concrete steps (e.g., expand, simplify, isolate a variable).

\paragraph{Observable trace $\mD$.}
The bottom-level variables are the observed tokens $\mD = [D_1, \dots, D_T]$ in the reasoning trace, generated by higher-level selection modules.
The arrows in \Cref{fig:causal_graph} are drawn in the identification direction, from lower-level variables toward higher-level summaries, while sampling proceeds top-down from $\mP$ through $\mS$ to $\mD$.

\begin{figure}[t]
\centering
\begin{tikzpicture}[scale=0.5, line width=0.6pt, inner sep=0.6mm, shorten >=.1pt, shorten <=.1pt]
  \tikzset{
    snode/.style={text=purple},
    dnode/.style={text=blue},
    every node/.style={align=center},
    edge from parent/.style={draw,->}
  }
  \node[snode] (s0) at (0,2) {$\mS_0$: Problem descriptor $\mP$};
  \node[snode] (s11) at (-1.5,-0.5) {$S_{1, 1}$};
  \node[snode] (s12) at (1.5,-0.5) {$S_{1, 2}$};
  \node[snode] (s21) at (-3,-3) {$S_{2,1}$};
  \node[snode] (s22) at (-1,-3) {$S_{2,2}$};
  \node[snode] (s23) at (1,-3) {$S_{2,3}$};
  \node[snode] (s24) at (3,-3) {$S_{2,4}$};

  \node[dnode] (d31) at (-3.75,-5.5) {$D_{1}$};
  \node[dnode] (d32) at (-2.25,-5.5) {$D_{2}$};
  \node[dnode] (d33) at (-0.75,-5.5) {$D_{3}$};
  \node[dnode] (d34) at (0.75,-5.5) {$D_{4}$};
  \node[dnode] (d35) at (2.25,-5.5) {$D_{5}$};
  \node[dnode] (d36) at (3.75,-5.5) {$D_{6}$};

\draw[-latex] (s11) -- (s0);
\draw[-latex] (s12) -- (s0);
\draw[-latex] (s21) -- (s11);
\draw[-latex] (s22) -- (s11);
\draw[-latex] (s23) -- (s11);
\draw[-latex] (s23) -- (s12);
\draw[-latex] (s24) -- (s12);
\draw[-latex] (s22) -- (s12);
\draw[-latex] (d31) -- (s21);
\draw[-latex] (d32) -- (s21);
\draw[-latex] (d32) -- (s22);
\draw[-latex] (d33) -- (s22);
\draw[-latex] (d34) -- (s22);
\draw[-latex] (d34) -- (s23);
\draw[-latex] (d35) -- (s23);
\draw[-latex] (d35) -- (s24);
\draw[-latex] (d36) -- (s24);

  \node[dnode, align=center, font={\footnotesize}] at (-6,-5) {Trace \\Tokens};
  \node[snode, align=center, font={\footnotesize}] at (-6,-2) {Modules};
\end{tikzpicture}
\caption{
    \textbf{Hierarchical Latent Selection Model.}
    A problem descriptor $\mP$ induces a hierarchy of discrete latent selection variables $\mS$ which generate an observable reasoning trace $\mD$.
    In \Cref{sec:theory} we study when this latent structure can be \emph{identified} from observed $(\mP,\mD)$. Appendix~\ref{sec:exp_algebra_example_app} gives a concrete algebra example illustrating skills, routing mechanisms, and identification conditions.
}
\label{fig:causal_graph}
\vspace{-0.4cm}
\end{figure}


\paragraph{Atomic modules: skills and routing mechanisms.}
The latent selections $\mS$ index a library of reusable atomic modules.
We will use the following two terms throughout:
\begin{itemize}
\setlength{\itemsep}{0pt}
  \setlength{\topsep}{0pt}
    \item \textbf{Skills (local operations):} modules that perform a local transformation (e.g., add/subtract, apply a rewrite rule, compute a derivative, compare two quantities).
    \item \textbf{Routing mechanisms:} modules that determine how intermediate information is selected, reused, and composed (e.g., carry the previous intermediate result forward, retrieve prior assumptions, choose which sub-expression to operate on next, branch/loop patterns).
\end{itemize}
For instance, in a multi-step algebra problem, a routing mechanism might select ``reuse the isolated expression for $x$'' as the next input, while a skill applies ``substitution.''

\begin{remark}[Why ``selection''?]
    \label{rem:why_selection}
    Reasoning traces must satisfy coupled constraints across steps.
    A purely top-down causal story often needs many within-level dependencies to enforce such coherence.
    In the selection reading (\Cref{fig:causal_graph}), higher-level variables act as discrete summaries of their children: a parent state exists \emph{because} its lower-level neighborhood forms a globally consistent configuration.
    Our identification proofs use this bottom-up view (Appendix~\ref{app:identification_theory}).
\end{remark}

\subsection{Compositional Generalization and Latent Structure Identification}
\label{subsec:cg_identification_formulation}


We study conditional generation of reasoning traces from descriptors. Let $p^{\star}(\mD \mid \mP)$ be the ground-truth distribution over valid traces, and $\hat p(\mD\mid \mP)$ be a learned estimate.

\paragraph{Training support and compositions.}
Let $p_{\mathrm{train}}(\mP)$ be the training distribution over descriptors, and define its support
\begin{align}
    \Ds := \supp\big(p_{\mathrm{train}}(\mP)\big) \subseteq \Dd,
\end{align}
where $\Dd$ denotes the descriptor space.
In many settings, the training support $\Ds$ contains descriptors that activate certain constraints in isolation but not all combinations.
As a simple example, if $\mP=(P_{1},P_{2})$ indicates whether two requirements are present, one might observe $(1,0)$ and $(0,1)$ during training but never $(1,1)$ in composition.

A canonical ``closure'' of $\Ds$ is the Cartesian-product:
\begin{align}
\begin{split}
    &\Dc := [\Ds]_1 \times \cdots \times [\Ds]_K, \\
    &[\Ds]_k := \{p_{k} : (p_{1},\dots,p_{K})\in\Ds\},
\end{split}
\end{align}
which contains all coordinate-wise recombinations of attribute values seen in training.
More generally, we will consider a \emph{compositional space} $\Dcomp$ satisfying
\begin{align}
    \Ds \subsetneq \Dcomp \subseteq \Dc,
\end{align}
where $\Dcomp$ can exclude incompatible combinations. Under this formulation, compositional generalization refers to the following extrapolation problem.
\begin{definition}[Compositional Generalization]
\label{def:compositional_generalization}
Suppose a model $\hat p(\mD\mid \mP)$ matches the true conditional distribution on the training support $\Ds$, i.e.,
\begin{align}
    \hat p(\mD\mid \mP) = p^{\star}(\mD\mid \mP), \qquad \forall\, \mP\in\Ds.
\end{align}
We say the model achieves \emph{compositional generalization} over $\Dcomp$ if it also matches the true conditional distribution on a strictly larger set $\Dcomp \supset \Ds$.
We call a descriptor $\mP$ \emph{composable} if $\mP\in\Dcomp$.
\end{definition}

\paragraph{Latent structure identification.}
The descriptor-to-trace map $p^{\star}(\mD\mid\mP)$ is mediated by the latent hierarchy $\mS$ in \cref{fig:causal_graph}.
Since $\mS$ is unobserved, learning compositional generalization hinges on whether training can recover (up to an appropriate equivalence) a representation that behaves like the underlying latent modules.
We therefore study \textbf{latent structure identification}: when and how the hierarchy of selection variables and their dependency structure can be inferred from observed $(\mP,\mD)$.
\Cref{sec:theory} provides conditions under which the hierarchical selection structure is identifiable and implies compositional generalization over $\Dcomp$.

%% file: sections/theory.tex
\section{Theoretical Framework: Identification $\rightarrow$ Composition}
\label{sec:theory}

\paragraph{From the model to the mechanism.}
The hierarchical latent selection model in \Cref{sec:formulation} turns compositional generalization into two concrete questions.
First, \textbf{identification}: can training recover reusable modules, namely skills and routing mechanisms, from observed $(\mP,\mD)$ pairs?
Second, \textbf{composition}: once those modules are recovered, when can their locally learned interfaces recombine correctly on compositionally novel descriptors $\mP\in\Dcomp\setminus\Ds$?

\subsection{Identification: When can we recover the latent module hierarchy?}
\label{subsec:identification}

\paragraph{What is being identified.}
Each latent node in \Cref{fig:causal_graph} corresponds to selecting an atomic module, either a \emph{skill} (local operation) or a \emph{routing mechanism} (how intermediate information is selected and reused).
Identification asks whether $P(\mD\mid\mP)$ pins down (i) the latent nodes, (ii) their adjacency, and (iii) the local conditionals up to component-wise relabeling. An identified module is reusable wherever its local prerequisites are met, whereas an entangled trace template remains tied to its training context.

\begin{theorem}[Informal: Identification of latent reasoning modules]
\label{thm:identification}
Consider the hierarchical selection model in \Cref{fig:causal_graph}.
Under Condition~\ref{cond:text_identification} (Appendix~\ref{app:identification_theory}), the latent selection variables $\mS$ and their adjacency structure are identifiable from the observed distribution $P(\mD\mid\mP)$, up to component-wise relabeling of discrete states.
\end{theorem}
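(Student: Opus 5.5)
The plan is to identify the hierarchy bottom-up, one level at a time, using the selection reading of \Cref{rem:why_selection}. Each latent node is a discrete summary of its children, so the observed tokens $\mD$ play the role of level $0$, and the variables at level $\ell+1$ are recovered from the (already identified) variables at level $\ell$. I take Condition~\ref{cond:text_identification} to supply, at every level, three things.

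\begin{itemize}
\item A \emph{pure-child / anchor} requirement: each latent $S_{\ell+1,j}$ has at least one child that is not shared with any other parent, or more generally enough children that its children set is not contained in the union of other parents' children sets.
\item A \emph{rank / full-support} requirement: the conditional probability tables linking a parent's state to its children's joint configurations have full column rank, equivalently the states are linearly independent as distributions on the children.
\item \emph{Faithfulness}: conditional independences among children given $\mP$ arise only from shared parents.
\end{itemize}

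The base step works on the observed layer with $\mP$ fixed. First, I use the faithfulness and conditional-independence structure of $P(\mD\mid\mP)$ to recover the bipartite adjacency between $\mD$ and level $1$. The plan is to find maximal cliques of the "dependence-not-explained-away" graph on tokens. Tokens sharing a parent are dependent given everything outside their block, while the selection structure makes disjoint blocks independent once the shared parent's state is fixed. Anchor (pure) children then let me separate overlapping blocks, such as $D_2$, $D_4$, $D_5$ in \Cref{fig:causal_graph}, because an overlapping child's dependence pattern is the union of two pure patterns.

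Given the adjacency, I identify each parent's state space and its conditional. The joint distribution over a parent's children is a finite mixture indexed by the parent's states, so I write it as a multi-way tensor over three disjoint groups of children: pure children, shared children, and further pure children. With full-rank factors, Kruskal's theorem gives uniqueness of the decomposition up to permutation of components. Since the components are exactly the discrete states, this yields identification up to component-wise relabeling, and the local conditionals $P(\text{children}\mid S_{1,j})$ come out as the factor matrices.

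The induction step then replaces the level-$\ell$ variables by their identified values and repeats the same argument on level $\ell+1$. Each relabeling is a bijection, so adjacency and rank conditions are preserved, and the ambiguity stays component-wise. Finally, at the top I read off the edges to $\mP$ from the dependence of the recovered level-$L$ posteriors on $\mP$.

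I expect the main obstacle to be the induction step. There the level-$\ell$ variables are latent, and only their posteriors given $\mD$ are identified, not their realized values. The tensor argument therefore has to be run on distributions of the identified lower latents, obtained by marginalizing through the already-identified conditionals, rather than on observed samples. I would first try to show that the map from the joint of level-$\ell$ latents to the observed marginal is injective, using the full-rank condition composed across levels, so that $P(\mS_\ell\mid\mP)$ is itself identified and can serve as "observations." If that fails, I fall back on Condition~\ref{cond:text_identification} guaranteeing deterministic (selection) parents, which makes lower latents functions of $\mD$ and hence effectively observed.
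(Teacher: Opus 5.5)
Your bottom-up skeleton matches the paper: a one-level result lifted by induction, with each recovered layer treated as observed because it is a deterministic statistic of the layer below. Your fallback in the last paragraph is exactly how the paper runs that induction, via item~\ref{asmp:deterministic_coarsest} of Condition~\ref{cond:text_identification}. The gap is in the one-level state-recovery step, which assumes a latent-class structure that the selection model does not have.

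In the paper's orientation the tokens are the \emph{parents} of a selection variable. Each $S=f_S(\mD^S,\tilde{\mD}^S)$ is the coarsest deterministic statistic of them (items~\ref{asmp:deterministic_coarsest} and~\ref{asmp:minimality}). So $S$ is a collider of its own token neighbors, and conditioning on it couples them rather than making them independent; for example, $D_1$ and $D_2$ remain dependent given $S_{2,1}$. Hence the joint law of those neighbors is not a mixture, indexed by the states of $S$, of product tables over your ``pure / shared / further pure'' groups, and Kruskal's uniqueness theorem does not return the states of $S$. Separately, Condition~\ref{cond:text_identification} contains no full-column-rank hypothesis. It guarantees only one diagnostic pure neighbor $D^\star$ with $|\mathrm{supp}(D^\star)|>|\mathrm{supp}(S)|$, and in \Cref{fig:causal_graph} $S_{2,1}$ has only $D_1,D_2$, so three disjoint views do not exist.

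The missing idea is that determinism makes any decomposition unnecessary. The construction has two steps.
\begin{enumerate}
\item Using $\mD^S\ind\mU\mid(S,\tilde{\mD}^S)$, fix a shared-parent context $\tilde{\mathbf{d}}$ and merge anchor values whose conditional laws $P(\mU\mid\mD^S=\mathbf{d},\tilde{\mD}^S=\tilde{\mathbf{d}})$ coincide.
\item Align these context-specific classes across contexts by comparing conditional laws of the variables outside $\mathrm{Pa}(\mathrm{Ch}(\tilde{\mD}^S))$.
\end{enumerate}
Item~\ref{asmp:minimality} says precisely that this observational equivalence equals the level sets of $f_S$. So $S$ is recovered up to relabeling as a partition of observed configurations. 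This also removes your worry that only posteriors of lower latents are identified.

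Your adjacency step has a related problem: conditional independences among observed tokens do not separate the blocks. Consider the path $D_1\to S_{2,1}\to S_{1,1}\leftarrow S_{2,2}\leftarrow D_3$. It passes through latents you cannot condition on, and through the collider $S_{1,1}$, whose descendant $\mP$ is conditioned on. So by faithfulness $D_1$ and $D_3$ stay dependent given every set of observed tokens. Your phrase ``once the shared parent's state is fixed'' presupposes the latent you are trying to find, and the resulting dependence graph is essentially complete.

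The paper detects latent bottlenecks with nonnegative rank instead. For a pure token $D$ and a conditioning set $\mC\supseteq\mathrm{CoPa}(D)$, the table $P(D,\mR\mid\mC=\mathbf{c})$ factors through $S$. By \Cref{lemma:nonnegative_rank} and neighborhood coverage, its nonnegative rank is at most $|\mathrm{supp}(S)|<|\mathrm{supp}(D)|$. An inclusion-minimal $\mC$ exhibiting this strict drop equals $\mathrm{CoPa}(D)$ by rank faithfulness, and No-Twins turns these groupings into a unique latent per neighborhood. This is why the Condition is phrased through rank faithfulness and the support inequalities of item~\ref{asmp:neighbors}, not through full-rank factor matrices.
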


\begin{figure}[t]
\centering
\begin{tcolorbox}[
  colback=sageAssumption,
  colframe=sageframe,
  coltitle=white,
  title=\textbf{Assumption Intuition for Condition~\ref{cond:text_identification}},
  boxrule=0.5pt,
  left=1mm,right=1mm,top=0.8mm,bottom=0.8mm,
  enhanced,breakable
]
\small
\begin{itemize}[leftmargin=1.2em,itemsep=0.25em,topsep=0.25em]
    \item \textbf{Observable signatures}
    (\Cref{cond:text_identification}-\ref{asmp:minimality}).
    Different module states have different observable predictive signatures, so local choices force different downstream effects.
    \emph{Example:} From $2(x+3)=14$, the next line $x+3=7$ (``divide first'') vs.\ $2x+6=14$ (``expand first'') is a tiny local cue that reveals the chosen step and forces different follow-ups.

    \item \textbf{Restricted choice sets}
    (\Cref{cond:text_identification}-\ref{asmp:natural_selection}).
    Valid reasoning occupies a structured subset of the combinatorial space: given its context, a module has only a small set of plausible next choices. \emph{Example:} In $58+67$, after computing $8+7=15$, the ``write digit / carry'' choice is essentially forced to (write $5$, carry $1$). All other (digit, carry) pairs are invalid.

    \item \textbf{Neighborhood coverage}
    (\Cref{cond:text_identification}-\ref{asmp:neighbors}).
    Each module must have a pure local cue and enough observed comparison context to decouple it from its typical co-occurrences.
    \emph{Example:} If \texttt{sort()} only ever appears right before ``take the median,'' it can be mistaken as one fused routine; seeing \texttt{sort()} also before ``binary search'' and ``deduplicate'' provides distinct neighborhoods that isolate sorting as a reusable module.

\end{itemize}
\end{tcolorbox}
\vspace{-0.6em}
\end{figure}


\paragraph{From identification to two post-training algorithms.} \Cref{thm:identification} gives sufficient conditions for identification. We focus on local events: node assignments and their child contexts that make the assignment meaningful,
\begin{align}
    \cE=\{U=u,\children{U}=v_{\children{U}}\}.
\end{align}
For a node $U$ and a tuple of children values $v_{\children{U}}$, define the \emph{training-witnessed local compatibility set}
\begin{align}
    \mathcal{W}_U\big(v_{\children{U}}\big)
    ~:=~
    \Big\{\,u\;:\; \exists\, p\in\Ds\ \text{s.t.} \notag\\
    \quad p^{\star}(U\!=\!u,\;\children{U}\!=\!v_{\children{U}}\mid \mP\!=\!p)>0\,\Big\}.
    \label{eq:local_witness_set}
\end{align}
That is, $u\in\mathcal{W}_U(v_{\children{U}})$ records that $\cE$ is observed under some compatible training descriptor.

For example, consider solving a system of equations. After deriving $x=2y+1$, the trace may need to reuse this expression as an intermediate and substitute it into the second equation. Here $\cE$ combines a skill, substitution, with a routing decision, sending an earlier intermediate result to the later equation where it is needed. In latent-selection notation, $U$ can represent the local choice that routes the derived expression forward, while $v_{\children{U}}$ represents the surrounding context that makes the substitution valid.

\paragraph{What SFT traces provide, and what they leave entangled.}
Supervised fine-tuning \emph{supplies the raw module material}: each compositional trace exhibits the atomic skills and routing mechanisms in working combination, the substrate the model needs to ever reach those modules during rollouts.
However, when each prompt comes with a single canonical chain-of-thought, the atomic modules co-occur almost deterministically with their usual contexts, and a learner can fit $p^{\star}(\mD\mid\mP)$ on $\Ds$ while never explicitly separating \emph{which module} produced \emph{which part} of the trace.
The supervised distribution therefore gives the \emph{materials} but not the \emph{decomposition}: module identities remain entangled with their typical neighborhoods because the local events that would separate them lie outside the SFT trace support---a \emph{hidden-support obstruction} formalized in \Cref{prop:sft_hidden_support}.

\begin{informalproposition}[SFT hidden support]
\label{prop:sft_hidden_support}
Fix a prompt marginal $\mu^\star$ and the true prompt-trace law
$Q^\star(p,d)=\mu^\star(p)p^\star(d\mid p)$.
For each prompt $p$, suppose SFT reveals only a subset $A_S(p)$ of valid traces, with mass
\begin{align}
    s(p):=\sum_d p^\star(d\mid p)\mathbf{1}\{d\in A_S(p)\}.
\end{align}
Let $q_S(d\mid p)$ be the conditional law obtained by renormalizing $p^\star(d\mid p)$ on $A_S(p)$, and let
$\widetilde Q(p,d)=\mu^\star(p)q_S(d\mid p)$.
Then $Q^\star$ and $\widetilde Q$ induce identical SFT observations, yet their trace laws differ by the hidden trace mass:
\begin{align}
    \|Q^\star-\widetilde Q\|_1
    =
    2\,\mathbb{E}_{\mu^\star}[1-s(p)].
\end{align}
Any identification-critical local event that occurs only outside $A_S(p)$ is absent under $\widetilde Q$ and therefore cannot be certified from SFT observations alone.
\end{informalproposition}

In the language above, $A_S(p)$ is the SFT-revealed neighborhood of $p$; events outside $A_S(p)$ are the local cues that would disentangle a module from its typical context, and the proposition says SFT alone cannot rule them in or out.

\paragraph{How RL decomposes traces by enriching identification-critical local events.}
RL re-samples rollouts under the verifier and reweights them by reward, so when an SFT-hidden event is \emph{reachable} by the current policy and traces containing it succeed more often than nearby variants that omit it, reward-positive rollouts \emph{enrich} the event, exposing a local difference that SFT alone could not certify. This is the support-expansion mechanism by which RL decomposes compound traces into identifiable modules.

\begin{informalproposition}[RL enrichment of useful local events]
\label{prop:rlvr_enrichment}
Fix $\mP=p$ and 
$\cE=\{U=u,\children{U}=v_{\children{U}}\}$, let $\hat p_0(\mD\mid\mP)$ denote the current model before the RL update, and define the reward gap $\Delta_{\cE}(p):=\Pr(R{=}1\mid\cE,\mP{=}p)-\Pr(R{=}1\mid\cE^c,\mP{=}p)$.
Suppose:
\begin{enumerate}[leftmargin=1.4em,itemsep=0.2em,topsep=0.2em]
    \item \textbf{Rollout reachability.} $0 < \hat p_0(\cE \mid \mP=p) < 1$: the current policy visits $\cE$ with positive probability.
    \item \textbf{Reward informativeness.} $\Delta_{\cE}(p) > 0$ and $\Pr(R{=}1\mid\mP{=}p)>0$: traces containing $\cE$ succeed more often than traces that omit it, with non-degenerate base success.
\end{enumerate}
Then reward-positive rollouts enrich $\cE$,
\begin{align}
    \Pr(\cE\mid R=1,\mP=p) - \hat p_0(\cE\mid\mP=p) >0,
\end{align}
with magnitude proportional to $\hat p_0(\cE\mid\mP{=}p)\bigl(1-\hat p_0(\cE\mid\mP{=}p)\bigr)\Delta_{\cE}(p)$, the local policy-gradient signal given descriptor $p$ that upweights $\cE$-traces.
\end{informalproposition}

\paragraph{Interpretation.}
Each condition closes off a way the update could be vacuous.
Condition~(1) (\emph{reachability}) says the current policy already places some, but not all, of its rollout mass on $\cE$; if rollouts never reach $\cE$ or always do, reward-conditioned resampling has no mass to redistribute.
Condition~(2) (\emph{reward informativeness}) says the verifier actually prefers $\cE$-containing traces; if rewards are uninformative about $\cE$, reweighting cannot favor it.
Under both, conditioning on $R{=}1$ strictly increases the rate at which $\cE$ appears, which is what we mean by RL decomposing the entangled trace into a locally identifiable choice.

\paragraph{Division of labor.}
\Cref{prop:sft_hidden_support} and \Cref{prop:rlvr_enrichment} together describe a \emph{division of labor}: SFT supplies the raw module material but cannot certify events outside its trace support, while RL targets exactly those SFT-hidden events whenever they are rollout-reachable and reward-informative. The empirical question is therefore \emph{what data each algorithm is paired with}, the prescription tested in \Cref{sec:finding4}; \Cref{app:sft_rl_support_theory} gives the complete formal statements and proofs.

\subsection{Composition: When do identified modules recombine out of distribution?}
\label{subsec:bridging_theory}

Identification alone does not guarantee compositional generalization: a novel prompt can force familiar modules to meet at \emph{new interfaces}. Reading \Cref{fig:causal_graph} bottom-up, a higher-level selection restricts which lower-level values are compatible; when several children constrain the same parent, it must simultaneously satisfy \emph{all} child-imposed constraints.

\begin{theorem}[Informal: Compositional generalization via local witnesses]
\label{thm:generalization}
Assume the latent model in \Cref{fig:causal_graph}.
If every configuration of latent parents arising under $p_{\mathrm{new}} \in\Dcomp$ admits a \emph{local witness}, i.e., for every induced local family $(U=u,\children{U}=v_{\children{U}})$ under $p_{\mathrm{new}}$,
\begin{align}
    u\in \mathcal{W}_U\big(v_{\children{U}}\big),
    \label{eq:witness_intersection}
\end{align}
then the locally learned constraints can indeed be composed consistently without contradiction under $p_{\mathrm{new}}$.
\end{theorem}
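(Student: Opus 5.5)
The plan is to read the hierarchical selection model bottom-up (\Cref{rem:why_selection}): each parent state is a discrete summary certifying that its children form a consistent configuration. The joint law of $(\mS,\mD)$ given $\mP$ then factorizes into local compatibility factors, one per family $(U,\children{U})$. Under this reading, a global configuration has positive probability under $p_{\mathrm{new}}$ exactly when every local factor is positive. The proof therefore reduces ``consistent composition'' to showing that the learned local factors, restricted to what training witnessed, accept the configuration $p_{\mathrm{new}}$ induces. Concretely, I would fix a realization $(s,d)$ with $p^{\star}(\mS=s,\mD=d\mid\mP=p_{\mathrm{new}})>0$. I would then enumerate its induced local families $\{(U=u,\children{U}=v_{\children{U}})\}$ over all latent nodes $U$, including the top node whose parent is the descriptor.

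First, I invoke \Cref{thm:identification}. Under Condition~\ref{cond:text_identification}, the model fitted on $\Ds$ recovers the latent nodes, their adjacency, and the local conditionals up to a componentwise relabeling $\pi_U$. Since the relabeling is a bijection on each node's states, positivity statements transfer directly. So for each $U$ the learned model can represent the support set $\{(u,v_{\children{U}})\}$ observed under some training descriptor, which is exactly $\mathcal{W}_U$ after relabeling.

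Second, I apply the hypothesis \eqref{eq:witness_intersection}: for each induced family, $u\in\mathcal{W}_U(v_{\children{U}})$. So some $p\in\Ds$ gives that local event positive mass, and the identified model assigns the corresponding local factor a positive value. The key point is that the local conditional of $U$ given its children does not depend on which descriptor produced it. This is the modularity or invariance of mechanisms built into the selection graph, since $\mP$ enters only through the top node. I would make this explicit as the statement that $p^{\star}(U=u\mid\children{U}=v_{\children{U}})$ is shared across descriptors.

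Third, I assemble the pieces. Taking the product of the local factors over all nodes, ordered by level from the tokens upward, gives a positive joint value for $(s,d)$ under the learned model at $p_{\mathrm{new}}$. There is therefore no node at which the constraints imposed by different children conflict: when several children constrain the same parent, the witnessed $u$ satisfies all of them at once, because the witness was taken jointly over the full child tuple $v_{\children{U}}$, not child by child. An induction on hierarchy depth formalizes that every partial configuration extends consistently. If the factors agree exactly, rather than only in support, this also gives matching distributions.

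The main obstacle is the invariance step. The witness set is defined with an existential quantifier over training descriptors, so the witnessing $p$ may differ from family to family. I must argue that the local mechanism is descriptor-independent below the top level, so that witnesses collected under different training prompts can be glued together. I would first try to derive this from the graph structure in \Cref{fig:causal_graph}, where $\mP$ is adjacent only to the top-level selections. At the top level itself, I would use the definition of $\Dcomp\subseteq\Dc$ to argue that each descriptor coordinate value was seen with the relevant child configuration.
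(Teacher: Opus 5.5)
\textbf{Verdict: genuine gap.} Your third step already contains the paper's whole argument, but you route the conclusion through a stronger claim that the theorem's hypotheses do not support.

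The part that matches the paper is this. You fix one configuration in the support of $p^{\star}(\cdot\mid\mP=p_{\mathrm{new}})$ and read every local family off that single assignment. The hypothesis then gives $u\in\mathcal{W}_U(v_{\children{U}})$ for each family. Because the witness is taken jointly over the full tuple $v_{\children{U}}$, no shared node has to reconcile child constraints that were only witnessed separately. A layer-by-layer pass then shows that every family of the assignment is training-witnessed. The paper stops at this constraint-level consistency. It states the learned-model part only conditionally: a model that has recovered the relations $\mathcal{W}_U$ can realize this assignment.

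You instead aim to show that the learned model gives $(s,d)$ positive, or even matching, probability at $p_{\mathrm{new}}$. The steps this requires do not go through.
\begin{itemize}
\item Descriptor-invariance of $p^{\star}(U=u\mid\children{U}=v_{\children{U}})$ is not a hypothesis, and it does not follow from $\mP$ being adjacent only to the top layer. In the selection reading the paper adopts, higher-level nodes are deterministic summaries of lower-level ones, and conditioning on $\mP$ is a selection event. So fixing $\children{U}$ conditions on colliders, and shared lower-level parents tie $U$ to sibling branches that feed $\mP$. Modularity would have to be posited as a separate assumption.
\item The top level fails outright. For the family $(U,\mP=p_{\mathrm{new}})$ with $p_{\mathrm{new}}\in\Dcomp\setminus\Ds$, one has $\mathcal{W}_U(p_{\mathrm{new}})=\emptyset$, so the hypothesis cannot cover it. Membership $p_{\mathrm{new}}\in\Dc$ only says each coordinate value occurred in some training descriptor. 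That places no constraint on $\hat p(\mS_1\mid\mP=p_{\mathrm{new}})$ without an extra per-coordinate factorization assumption.
\item Invoking \Cref{thm:identification} imports Condition~\ref{cond:text_identification}, which the statement does not assume. It is also a uniqueness result for parameterizations with equal observed laws, not a transfer result from $\Ds$ to $p_{\mathrm{new}}$.
\end{itemize}

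The repair is to prove only what the statement asserts. As the paper does, take the high-level latent configuration as given in the support of $p_{\mathrm{new}}$. Apply the witness hypothesis only to the latent families induced below it. Conclude that this one assignment lies in every witnessed local constraint set at once.

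Your \enquote{gluing} obstacle then disappears. Different families may be witnessed under different training prompts, but global consistency comes from the single assignment drawn under $p_{\mathrm{new}}$, not from assembling the witnesses. So neither identification nor descriptor-invariance is needed.
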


\paragraph{Interpretation.}
Condition \eqref{eq:witness_intersection} requires only that whenever a novel prompt makes several higher-level variables $\children{U}$ meet at a shared parent $U$, training has witnessed a compatible value $u\in\mathcal{W}_U(v_{\children{U}})$. This condition refers back to the support mechanism above. Proposition~\ref{prop:sft_hidden_support} says that missing local events cannot be certified from censored supervised traces alone, while Proposition~\ref{prop:rlvr_enrichment} says that reachable and reward-informative events are enriched by RL. This precisely explains why compound traces are useful in practice when they exercise module interfaces, and RL gains concentrate on off-support compositions when rollouts expand the local witness sets needed by \Cref{thm:generalization}. If two child modules have never shared a parent value during training, the witness set is empty and \eqref{eq:witness_intersection} fails.

\paragraph{Connection to our empirical protocol.}
\Cref{sec:experiments} empirically tests these messages by varying compound-only training, held-out skills, held-out routing mechanisms, and the overlap between supervised and RL support:
\begin{itemize}[leftmargin=1.2em,itemsep=0.2em,topsep=0.2em]
    \item \textbf{Decomposition into atoms.} \Cref{sec:finding1} trains on compound-only traces and evaluates both atomic recovery and challenging unseen-composition transfer.
    \item \textbf{Material vs.\ decomposition.} \Cref{sec:finding2,sec:finding3} withhold atomic skills and routing mechanisms, showing that recovery requires re-injecting \emph{compositional} traces; pure atomic re-injection is insufficient at depth.
    \item \textbf{Algorithm--data pairing.} \Cref{sec:finding4} varies the SFT--RL composition-set relationship, finding that disjoint SFT/RL data dominate on unseen compositions.
\end{itemize}

%% file: sections/experiments.tex
\section{Empirical Findings: SFT-Supplied Materials, RL Decomposition, and the Data Pairing That Connects Them}
\label{sec:experiments}

To validate our theoretical analysis, we run controlled experiments on synthetic string transformation tasks. By precisely intervening in the data structure of training samples during SFT and RL, we systematically control the availability of atomic modules and their compositions, enabling a direct examination of how SFT-supplied compositional traces and RL together support compositional generalization, and of which SFT--RL data pairing is most effective.

\subsection{Experimental Setup}
\label{sec:exp_setup}

\paragraph{Tasks.} Our experimental task follows the synthetic string transformation introduced in prior work~\cite{yuan2025llms}, in which atomic modules and their compositions are precisely defined. Specifically, the task consists of a fixed set of deterministic string transformation functions ${f_i}(x)$, where $i$ indexes the functions, which serve as atomic skills.
In addition, we define function input structures as atomic routing mechanisms. For example, given a composition of three steps, $g^{3}_{1}(y_1, y_2) = y_{1}$ and $g^{3}_{2}(y_1, y_2) = y_{1} + y_{2}$ denote two distinct routing mechanisms. The former sends only the first-step output to the third step, while the latter sends both the first-step and second-step outputs. Here, $y_{1}$ and $y_{2}$ are the outputs of steps 1 and 2, respectively.
Compositional reasoning instances are constructed by nesting these functions and routing structures to varying depths (e.g., $f_7\left(f_{11}(x), f_{3}(x)\right)$). We report performance as the accuracy of the predicted string after applying the specified transformation. We use the 24 atomic skills from~\cite{yuan2025llms} and design 10 new atomic routing mechanisms. The compositional depth, referred to as level $L$, directly controls task difficulty and enables systematic evaluation of generalization to unseen compositions.

\paragraph{Training and Evaluation.} 
The training consists of two stages over the same prompt family. In the first stage, SFT trains the model on correct reasoning traces. In the second stage, RL samples rollouts for the same task distribution and receives reward only from the final output string.
To construct the chain-of-thought (CoT) training data for SFT, we generate ten reasoning-inclusive responses per problem and retain only those with correct final answers as the training data.
Both SFT and RL may train the model using either atomic modules (skills and routing) or compositional reasoning traces based on atoms, where we modify the data structure to analyze the underlying learning mechanisms.
For the OOD evaluation setting, generalizability is assessed by testing models on compositional instances whose combinations are not observed during training in either the SFT or RL stages. In what follows, unless explicitly noted, ``compositions'' in evaluation denote unseen compositions.
Appendix~\ref{sec:app_full_exp_setup} gives the full model, data, training, and hyperparameter details for all experiments.

\paragraph{Why synthetic tasks.}
Synthetic tasks are not a substitute for real benchmarks; they are a tool for isolating causal mechanisms. They let us manipulate support coverage and composition structure without confounds, directly testing whether RL produces modules or memorizes traces; \cref{sec:exp_real_models} complements this with open-source SFT/RL analyses.



\subsection{Finding 1: RL Decomposes Traces into Atoms and Recombines Them for Generalization}
\label{sec:finding1}

\paragraph{Task setting.}
To examine whether RL can decompose compound reasoning traces into reusable atomic skills, we restrict both the SFT and RL stages to observe only compound reasoning traces at a fixed depth ($L=3$), such as $f_3(f_2(f_1(x)))$. For evaluation, we first assess the performance of models trained with and without RL on individual atomic functions (e.g., $f_1, f_2, f_3$) (Figure~\ref{fig:depth_extrapolation} at $L=1$). We then evaluate generalization to unseen compositions at increasing trace depths ranging from $L=2$ to $L=8$ (Figure~\ref{fig:depth_extrapolation}). In addition, we report performance on both seen and unseen compositions for comparison (Figure~\ref{fig:in_off_support_compare}).

\begin{figure}[t]
\centering
\includegraphics[width=0.8\linewidth]{figs/2_rl_recombination_depth_extrapolation.jpg}
\caption{
\textbf{Accuracies of SFT and SFT+RL across compositional depths.} Models are trained on traces with $L_{\text{train}}=3$. RL substantially improves test-time compositional generalization performance over different compositional depths.
}
\label{fig:depth_extrapolation}
\vspace{-0.6cm}
\end{figure}
\begin{figure}[t]
\centering
\includegraphics[width=0.8\linewidth]{figs/3_rl_recombination_on_off_support.jpg}
\caption{
\textbf{
Comparison of model accuracy on seen and unseen compositions.} RL provides limited gains on seen compositions while delivering substantially larger gains on unseen compositions.
}
\label{fig:in_off_support_compare}
\vspace{-0.6cm}
\end{figure}

\begin{figure*}[t]
\centering
\includegraphics[width=1.0\linewidth]{figs/camera_ready/5_seed.png}
\caption{
\textbf{Accuracy on compositional traces involving held-out atomic skills across varying trace depths and re-injection strategies.} The solid line represents the average result over two independent runs with different random seeds, while the shaded region shows the empirical variation across runs. \enquote{Iso.} denotes \enquote{Isolated}. All settings share the same SFT model trained on $L=3$ traces. We remove compositions involving the held-out skill $f_{\star}$ from the base RL corpus, then augment with re-injection traces, either isolated $f_{\star}$ at $L=1$ or compositions containing $f_{\star}$ at varying depths. RL proceeds from the SFT checkpoint on this modified corpus; no second finetuning round. Atomic knowledge is essential for effective RL training, and removed atomic knowledge can be re-injected only through compositions that involve the corresponding atoms, rather than through isolated atomic skills.
}
\label{fig:learn_a_skill_or_its_comp}
\end{figure*}

\paragraph{Observations and discussions.}
The accuracy at $L=1$ in Figure~\ref{fig:depth_extrapolation} shows that models perform well on atomic skills with only $L_{\text{train}}=3$ training traces, where RL adds a $43.7\%$ gain over SFT-only training. On unseen compositions, SFT drops rapidly with depth while RL remains higher accuracy across depths (avg $+40.3\%$); the SFT-vs-RL gap is much larger on unseen than seen compositions (\Cref{fig:in_off_support_compare}).

\begin{table}[h]
  \centering
  \small
  \caption{\textbf{Effect of RL data structure on OOD compound traces.} Starting from the same SFT model trained on $L=3$ traces, we run RL for 300 steps with either atomic modules ($L=1$) or compound traces ($L=3$), then evaluate on OOD compound traces ($L=4$).}
  \label{tab:compound_trace_ablation}
  \setlength{\tabcolsep}{5pt}
  \resizebox{0.8\linewidth}{!}{
  \begin{tabular}{lcc}
    \toprule
    \textbf{Training setting} & \textbf{Accuracy (\%)} & \textbf{Gain} \\
    \midrule
    SFT baseline & 4.8 & -- \\
    SFT+RL, atomic modules & 14.8 & +10.0 \\
    SFT+RL, compound traces & 42.6 & +37.8 \\
    \midrule
    Compound advantage & -- & +27.8 \\
    \bottomrule
  \end{tabular}
  }
  \vspace{-0.3cm}
\end{table}

Table~\ref{tab:compound_trace_ablation} isolates the role of compound RL traces under a fixed SFT initialization and RL budget. Atomic-module RL improves over SFT by reinforcing reusable operations, but compound-trace RL gives a much larger gain because it exposes the local interfaces between modules, matching the witness-based composition view in \Cref{thm:generalization}.

\begin{takeaway}
    RL decomposes SFT-supplied compound traces into reusable atoms and recombines them for unseen compositions; under the same SFT initialization, compound-trace RL yields much larger OOD gains than atomic-only RL.
\end{takeaway}

\subsection{Finding 2: Composability Requires Combinational Exposure during RL}
\label{sec:finding2}

\paragraph{Task setting.}
To examine how RL performs when atomic knowledge is partially absent, we start from the base setting where SFT and RL both train on $L=3$ compositional traces covering all atomic skills (defined transformation functions), then remove 8 atomic skills (1/3) and exclude all compositions containing them from RL. We compare three RL settings: no re-injection, re-injection of isolated atomic skills, and re-injection of compositional traces of varying depth. Model performance is evaluated on compositional traces containing the removed skills across $L=1$ to $L=4$.

\paragraph{Observations and discussions.}
As shown in Figure~\ref{fig:learn_a_skill_or_its_comp}, when a subset of atomic skills is removed from RL training, models trained without any re-injection of the corresponding compositions exhibit consistently lower accuracy across evaluation depths. In contrast, re-injecting compositional traces that contain the held-out atomic skills leads to substantial performance recovery. These comparisons suggest that the atomic knowledge is foundational for effective RL training. When certain atomic skills are absent, RL alone is insufficient to recover them in isolation. 
One notable observation is that directly re-injecting atomic skills does not generalize well to traces with greater compositional depth. This is because exposure to isolated atomic skills alone does not teach the model how to recombine them to solve new compositions, whereas re-injecting compositional traces provides combinational exposure during RL that supports deeper generalization. Appendix~\ref{sec:exp_more_converged_status_app} reports 1000-step convergence runs for this experiment.

\begin{takeaway}
    Atomic knowledge is necessary but not sufficient: models must encounter atoms within compositional contexts to learn to recombine them and generalize systematically to deeper, unseen compositions.
\end{takeaway}

\vspace{-0.2cm}
\subsection{Finding 3: A Shared Learning Mechanism for Skills and Routing}
\label{sec:finding3}

We repeat the experiment for routing mechanisms, which determine which intermediate outputs are reused.

\paragraph{Task setting.}
Beyond atomic skills, we further examine how atomic routing mechanisms are learned. Following the task setting in Section~\ref{sec:finding2}, we remove one atomic routing mechanism during RL training. Specifically, we exclude RL training compositions that contain the held-out routing mechanism, then re-inject lower-depth compositions involving that mechanism and evaluate on unseen compositions of depths $L=3$ and $L=4$ that exercise the held-out routing.

\paragraph{Observations and discussions.}
Figure~\ref{fig:result_of_routing_mechanism} shows a pattern similar to the skill-based experiments: removing atomic routing mechanisms during RL training degrades generalization, while re-injecting compositional traces that contain the held-out mechanisms largely restores performance. Interestingly, re-injecting routing compositions with lower depth leads to stronger improvements than deeper compositions. This suggests that simpler routing compositions make the underlying atomic routing mechanisms easier to recover.

\begin{figure}[t]
\centering
\includegraphics[width=1.0\linewidth]{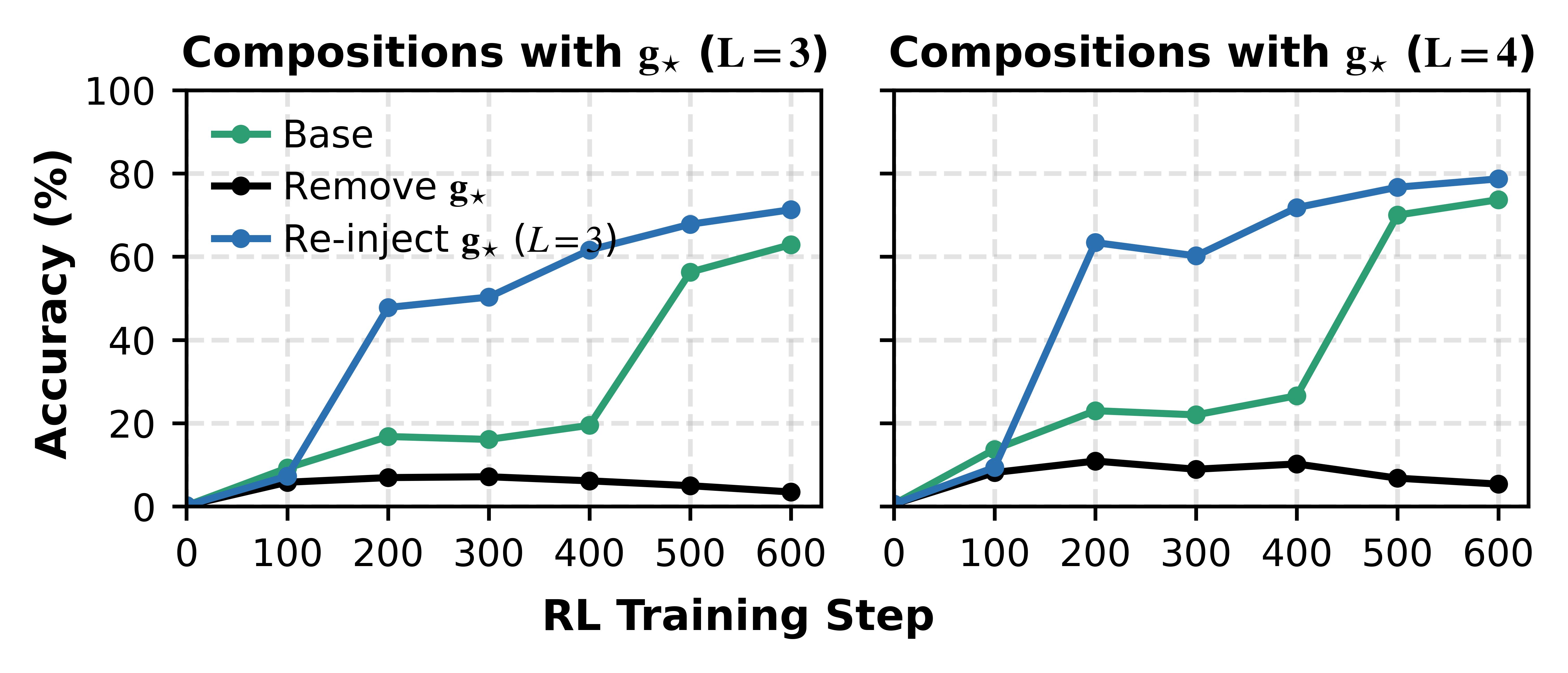}
\caption{
\textbf{Accuracy comparison across settings with full routing compositions, atomic routing mechanism removal, and routing mechanism re-injection with lower depth.} We report the performance on unseen compositions with both $L=3$ and $L=4$.
}
\label{fig:result_of_routing_mechanism}
\end{figure}

\begin{figure*}[t]
\centering
\includegraphics[width=1.0\linewidth]{figs/camera_ready/7_seed.png}
\caption{
\textbf{Comparison of models under different SFT-RL data distribution relationships.} We evaluate the performance on seen compositions, unseen compositions, and depth extrapolation to higher compositional levels, respectively. The solid line reports the mean average over two seeds, and the shaded region shows variation across random seeds.
}
\label{fig:sft_rl_overlap}
\end{figure*}

\subsection{Finding 4: Pairing SFT and RL Data: The Distribution Relationship That Matters}
\label{sec:finding4}

Findings~2 and~3 show that skills and routers are reusable only in the right compositional contexts. We now ask how the SFT and RL composition sets should relate so that SFT covers the atomic inventory while RL explores beyond it.

\paragraph{Experiment I.}
We explore how the relationship between the SFT and RL data distributions (composition sets) influences atom discovery and reasoning. Specifically, we consider four canonical relationships between the two composition sets, denoted by $\mathcal{C}_{\text{SFT}}$ and $\mathcal{C}_{\text{RL}}$: 
(1) $\mathcal{C}_{\text{SFT}} \supset \mathcal{C}_{\text{RL}}$, where RL compositions are fully covered by SFT; (2) $\mathcal{C}_{\text{SFT}} \subset \mathcal{C}_{\text{RL}}$, where RL expands beyond the SFT composition set; (3) $\mathcal{C}_{\text{SFT}} \not\subset \mathcal{C}_{\text{RL}}$, $\mathcal{C}_{\text{SFT}} \not\supset \mathcal{C}_{\text{RL}}$ and $\mathcal{C}_{\text{SFT}} \cap \mathcal{C}_{\text{RL}} \neq \emptyset$, where the two composition sets partially overlap; (4) $\mathcal{C}_{\text{SFT}} \cap \mathcal{C}_{\text{RL}} = \emptyset$, where the SFT and RL distributions are disjoint.


We train models with composition depth $L=3$, and evaluate them on seen compositions, unseen compositions, and a depth extrapolation setting with greater composition depth.

\paragraph{Observations and discussions.} As shown in Figure~\ref{fig:sft_rl_overlap}, different SFT-RL distribution relationships have negligible impact on performance in seen settings, but substantially affect generalization to unseen settings. In particular, the setting $\mathcal{C}_{\text{SFT}} \supset \mathcal{C}_{\text{RL}}$, where SFT already covers the RL composition set, exhibits the weakest unseen performance. By contrast, the disjoint setting $\mathcal{C}_{\text{SFT}} \cap \mathcal{C}_{\text{RL}} = \emptyset$ achieves the strongest generalization. This suggests that RL benefits from compositions that fall outside the SFT composition support, as reduced overlap encourages exploration beyond supervised patterns. Appendix~\ref{sec:exp_more_converged_status_app} reports longer 1000-step training runs showing the same ordering.

\begin{figure*}[t]
\centering
\includegraphics[width=1.0\linewidth]{figs/6_sft23_better_than_sft1.png}
\caption{
\textbf{Comparison of SFT training on atomic skills ($L=1$) versus compositional traces ($L=2 \& 3$) across composition depths.}
SFT trained on compositional traces exhibits stronger generalization performance.
}
\label{fig:sft23_better_than_sft1}
\end{figure*}

\begin{table}[t]
  \centering
  \small
  \caption{ \textbf{Performance comparison with and without atomic absence in the SFT stage across seen and unseen compositional settings.} \enquote{2}/\enquote{3} denotes the composition depth.
  }
  \label{tab:exp_iii_a}
  \setlength{\tabcolsep}{6pt}
  \resizebox{\linewidth}{!}{
  \begin{tabular}{l|l|cccc}
    \toprule
    \textbf{Method} & \textbf{Stage} & \textbf{Seen-2} & \textbf{Seen-3} & \textbf{Unseen-2} & \textbf{Unseen-3} \\
    \midrule
      \multirow{2}{*}{No Absence in SFT} & SFT & 0.91 & 0.83 & 0.30 & 0.22 \\
      & ~~+RL & 0.99 & 0.98 & 0.67 & 0.55 \\
    \midrule
      \multirow{2}{*}{Absence in SFT} & SFT & 0.90 & 0.84 & 0.24 & 0.15 \\
      & ~~+RL & 0.99 & 0.96 & 0.52 & 0.36 \\
    \bottomrule
  \end{tabular}
  }
\vspace{-0.6cm}
\end{table}

\paragraph{Experiments II--III: SFT data design.} We further analyze how the SFT data shapes RL gains along two axes. \emph{(II) Atom coverage:} we split $131k$ training compositions ($L=2,3$) into two partitions where SFT either covers all atomic skills or excludes 4, while RL always uses the full atom set. Partition choice has little effect in-distribution but markedly hurts OOD performance, especially after RL, whenever atoms are absent from SFT (Table~\ref{tab:exp_iii_a}). \emph{(III) Compositional depth:} varying SFT data from atomic skills ($L=1$) to compositional traces ($L=2,3$), Figure~\ref{fig:sft23_better_than_sft1} shows that SFT on compositional traces yields stronger OOD generalization, including on atomic-skill tasks ($L=1$).

\begin{takeaway}
SFT should cover all atomic modules through compositional traces, while RL should target genuinely novel compositions outside the SFT support.
\end{takeaway}

\subsection{Complementary Evidence on Open-Source Models}
\label{sec:exp_real_models}

As a complement to the controlled experiments, we test whether the same signature appears in real post-trained LLMs. We collect 643 math problems from MATH-500, AIME 2024/2025, AMC, and GSM8K, generate traces from Qwen3-4B and Qwen3-4B-Thinking-2507, and map each trace to a sequence of abstract mathematical skills. The RL-tuned model shows richer diversity in short skill $n$-grams ($n=2,3$), indicating more flexible local recombination; longer $n$-grams are less diagnostic. Short motifs probe whether a reasoning operation can be paired with different neighboring operations across problems, the real-model analogue of the interface coverage tested above. For example, a pattern that combines \enquote{set up equation} with \enquote{case split} in one problem and with \enquote{substitute back} in another is more suggestive of reusable reasoning pieces than a long repeated template that appears only in one problem family. Appendix~\ref{sec:exp_real_models_app} gives the pipeline and full plots.

%% file: sections/conclusion.tex
\section{Conclusion}
\label{sec:conclusion}

We introduce a selection-based view of LLM reasoning, where a prompt induces a latent hierarchy of modular decisions that ultimately produces the observed compositional trace. We showed that SFT and RL play asymmetric, complementary roles: SFT supplies compositional traces that contain the atomic inventory, while RL re-samples those traces under reward and efficiently enriches identification-critical local events the supervised distribution kept entangled, explicitly decomposing the trace into reusable atomic modules with locally compatible interfaces. Our experiments confirm an effective data-pairing protocol: SFT ensures coverage of all atomic modules through compositional traces, while RL focuses on novel compositions outside the SFT support. This division of labor follows from the identification conditions (\Cref{thm:identification}), the local-witness composition theory (\Cref{thm:generalization}), and the support-side conditions for RL enrichment: reward informativeness, rollout reachability, and SFT-hidden trace mass.

\paragraph{Implications for curriculum design.}
The identification--composition picture yields a concrete corollary for practitioners: the SFT and RL stages should be co-designed to cover complementary parts of the composition space. SFT should expose the atomic inventory through compositional traces, since these are the substrate the model later decomposes; RL should target compositions whose interfaces are unseen in SFT, converting reward signal into identification-critical local events. This recasts the SFT--RL distribution as a curriculum hyperparameter, complementing efforts that tune SFT scale and RL learning rate independently.

\paragraph{Limitations and future work.}
Our empirical study is intentionally simplified to enable precise control over compositional structure; extending this coverage-controlled methodology to richer domains (e.g., math, code, and tool use) is an exciting direction. The importance of local interface coverage suggested by our theory also points to future RL objectives and curricula that explicitly target under-explored interfaces, potentially improving sample efficiency and robustness to distribution shift. An open question is whether such an identification-driven curriculum can be discovered adaptively, by tracking which local interfaces the current policy realizes and steering RL exploration toward those where identification gain is greatest.

%% file: sections/acknowledgement.tex
\paragraph{Acknowledgments.}
We would like to acknowledge the support from NSF Award No.~2229881, AI Institute for Societal Decision Making (AI-SDM), the National Institutes of Health (NIH) under Contract R01HL159805, and grants from Quris AI, Florin Court Capital, MBZUAI-WIS Joint Program, and the Al Deira Causal Education project.
We thank Yuke Li for proofreading and the anonymous reviewers for their valuable feedback and suggestions

%% file: sections/impact_statement.tex
\section*{Impact Statement}

This paper presents work whose goal is to advance the field of Machine
Learning. There are many potential societal consequences of our work, none
which we feel must be specifically highlighted here.

%% file: sections/appendix.tex
\input{sections/related_work}

\section{Detailed Discussions with Existing Work}
\label{app:discuss}

\paragraph{Comparison with~\cite{yuan2025llms}.}
To clarify the contributions of our work, we compare it with~\cite{yuan2025llms}. First, leveraging the well-designed atomic skills based on string transformation functions in~\cite{yuan2025llms}, we adopt the same task framework for controlled experimentation. (It is not our contribution.) Building on these atomic skills, we further introduce explicit routing mechanisms through function input structures, enabling a systematic study of how RL learns routing behavior. 
Second, while~\cite{yuan2025llms} demonstrates that RL enables compositional abilities, we go further by identifying the underlying mechanism through which RL discovers reusable atomic skills. 
Third, beyond the cross-depth out-of-distribution evaluations considered in~\cite{yuan2025llms}, we conduct targeted interventions on training compositions to examine (i) how atomic skills and compositional traces influence RL, and (ii) how the relationship between the distributions of SFT and RL affects atomic discovery.
Finally, and most importantly, we establish a latent-variable-model-based theoretical framework that formally analyzes the learning mechanism of RL, providing principled insights that complement and extend prior empirical findings.

\paragraph{Comparison with \cite{wen2025reinforcement}.}
\cite{wen2025reinforcement} is another work that also introduces the theoretical analysis for the behavior of RL. Specifically, it provides an \emph{optimization-dynamics} account of RL: under GRPO, a pre-trained model is assumed to satisfy a \emph{logic prior}, so the expected advantage is positive for samples with correct reasoning and negative otherwise, implying RL mainly \emph{reweights} the model toward correct CoTs even when the reward only checks the final answer.
In contrast, our theory targets \emph{compositional generalization}: we model reasoning as a \emph{hierarchical latent selection} process (skills + routing) and formalize two technical problems---\textbf{identification} (recovering latent modules/structure from $P(\mD\mid \mP)$) and \textbf{composition} (recombining learned local constraints on novel descriptors $\mP\in\Dcomp\setminus\Ds$).
Accordingly, our main sufficient conditions are (i) identifiability of the latent hierarchy (up to relabeling) and (ii) a ``local witness'' condition ensuring locally learned constraints can be composed without contradiction on new compositions.
This makes RL's theoretical role different: beyond shifting probability mass toward already-valid traces, RL helps by inducing \emph{trajectory diversity} that expands support over module interfaces (creating the local witnesses needed for identification and reliable recombination).

\section{Identification Theory~\ref{thm:identification}} \label{app:identification_theory}


\subsection{Formal Theorem~\ref{thm:identification}} \label{app:text_theory}

We study identifiability of the discrete selection variables $\sel$ in the hierarchical selection model.
Intuitively, each latent selection variable should be a \emph{compressed} representation of its lower-level neighborhood: it should preserve all predictive information about the rest of the trace while using as few states as possible.

\begin{restatable}[Textual Concept Identification Conditions]{condition}{textidentificationconditions} \label{cond:text_identification} {\ }
    \begin{enumerate}[label=\roman*,leftmargin=2em, topsep=0.5pt, partopsep=0pt, itemsep=-0.0em]
        \item \label{asmp:faithfulness} \textbf{Faithfulness}. All (conditional) independence relations in the observed distribution are entailed by d-separation in the underlying graph.

        \item \label{asmp:rank_faithfulness} \textbf{Rank faithfulness}.
        The observed distribution $P$ is rank-faithful to the latent DAG $G$: every nonnegative-rank constraint on a conditional sub-probability table of observed variables that holds in $P$ is entailed by the class of discrete structural models Markov to $G$ with the stated latent cardinalities.

        \item \label{asmp:natural_selection} \textbf{Restricted choice sets}.
        Each selection variable $\sele_l$ only realizes a strict subset of the combinatorial state space obtained by freely composing its immediate constituents.
        Concretely,
        \begin{align}
            \supp{\sele_l} \subsetneq f_{\dis \to \sele_l}\big(\Omega^{n(\parents{\sele_l})}\big),
        \end{align}
        where $f_{\dis \to \sele_l}$ denotes the (level-$l$) composition map from lower-level variables to $\sele_l$.

        \item \label{asmp:deterministic_coarsest} \textbf{Deterministic coarsest selection}.
        For each latent node $S$, $S$ is the unique coarsest deterministic statistic of $\parents{S}$.

        \item \label{asmp:neighbors} \textbf{Neighborhood coverage}.
        For every selection variable $\sele_l$, there exists a diagnostic pure parent $D^\star\in\parents{\sele_l}$ such that
        $\children{D^\star}=\{\sele_l\}$ and
        \begin{align}
            \abs{\supp{D^\star}} > \abs{\supp{\sele_l}}.
        \end{align}
        Moreover, the observed variables outside this diagnostic cue provide a non-vacuous comparison table.
        Writing
        \begin{align}
            C_{D^\star}:=\coparents{D^\star},
            \qquad
            R_{D^\star}:=\dis\setminus\big(\{D^\star\}\cup C_{D^\star}\big),
        \end{align}
        there exists $c\in\supp(C_{D^\star})$ such that
        \begin{align}
            \abs{\supp\big(R_{D^\star}\mid C_{D^\star}=c\big)}
            \ge
            \abs{\supp(S_{l})}.
        \end{align}
        Thus each latent selection has a pure local cue, and the opposite side of the rank table is large enough for a rank drop to reveal a genuine latent bottleneck.

        \item \label{asmp:minimality} \textbf{Observable module signatures}.
        For each selection variable $\sele$, write $\parents{\sele}=\dis^{\sele}\cup\tilde{\dis}^{\sele}$ and let
        $\selel=f_{\sele}(\disl^{\sele},\tilde{\disl}^{\sele})$ denote its deterministic state.
        The observable predictive signature used to align local classes determines exactly this state: for any two parent configurations,
        \begin{align}
            &f_{\sele}(\disl^{\sele}_1,\tilde{\disl}^{\sele}_1)
            =
            f_{\sele}(\disl^{\sele}_2,\tilde{\disl}^{\sele}_2)
            \notag\\
            &\qquad\Longleftrightarrow\qquad
            ([\disl^{\sele}_1]_{\tilde{\disl}^{\sele}_1},\tilde{\disl}^{\sele}_1)
            \notag\\
            &\qquad\qquad
            \equiv_{\mathrm{obs}}
            ([\disl^{\sele}_2]_{\tilde{\disl}^{\sele}_2},\tilde{\disl}^{\sele}_2),
        \end{align}
        where $[\disl^{\sele}]_{\tilde{\disl}^{\sele}}$ is the context-specific class from Step~2a and $\equiv_{\mathrm{obs}}$ is the exact observational equivalence relation defined in Step~2b.
        This rules out both accidental merges of distinct module states and artificial splits of the same state across different shared-parent contexts.

        \item \label{asmp:twins} \textbf{No-Twins}.
        Two distinct latent variables do not share exactly the same neighborhood (parents and children).

        \item \label{asmp:maximality} \textbf{Maximality}.
        The latent representation is not artificially over-refined: splitting any latent variable into multiple variables would violate either the Markov property of the graph or the No-Twins condition.
    \end{enumerate}
\end{restatable}

\paragraph{Reading Condition~\ref{cond:text_identification}.}
Faithfulness (\Cref{cond:text_identification}-\ref{asmp:faithfulness}) is the standard requirement for recovering graphical structure from independence relations~\citep{spirtes2001causation}.
Rank faithfulness (\Cref{cond:text_identification}-\ref{asmp:rank_faithfulness}) is the corresponding anti-degeneracy condition for nonnegative-rank constraints on conditional probability tables~\citep{huang2022latent}.
\Cref{cond:text_identification}-\ref{asmp:natural_selection} formalizes the idea that meaningful traces occupy a tiny, structured subset of the naïve Cartesian product of token configurations.
Deterministic coarsest selection (\Cref{cond:text_identification}-\ref{asmp:deterministic_coarsest}) states that the latent selection is the coarsest statistic of its lower-level neighborhood with respect to its local Markov and predictive role in the selection hierarchy.
Neighborhood coverage (\Cref{cond:text_identification}-\ref{asmp:neighbors}) supplies a pure diagnostic cue for each latent selection and enough observed support on the other side of the rank table for the cue to reveal a genuine bottleneck.
Observable module signatures (\Cref{cond:text_identification}-\ref{asmp:minimality}) ensure that the predictive signatures used in the proof identify the same module state across different shared-parent contexts.
Together these support and signature requirements play the role of atomic-cover conditions in latent hierarchical structure discovery~\citep{huang2022latent,dong2023versatile}.
Finally, No-Twins and Maximality (\Cref{cond:text_identification}-\ref{asmp:twins},\ref{asmp:maximality}) rule out redundant copies and arbitrary refinements of the latent structure, as commonly assumed in identifiability results for discrete latent variable models~\citep{kivva2021learning,kivva2022identifiability}.

\paragraph{Proof roadmap.}
The identification proof proceeds in three steps.
First, covered local neighborhoods let us detect candidate latent states because different module choices leave different local consequences in the observed trace.
Second, rank and conditional-independence comparisons identify which observed variables share the same latent parent and remove variables that are not part of the local neighborhood.
Third, the one-level recovery is applied bottom-up through the hierarchy; the neighborhood coverage and no-twins conditions ensure that local pieces are stitched into a single latent structure rather than duplicated or arbitrarily split.
This roadmap is the formal counterpart of the main-text assumption intuition.

\begin{restatable}[Textual Concept Identification]{theorem}{textidentification} \label{thm:text_identification} {\ }
Assume the hierarchical selection process in \cref{fig:causal_graph}.
Let the true parameters be $\bm\theta_{\mathrm{T}}$.
If both the true parameterization $\bm\theta_{\mathrm{T}}$ and an alternative parameterization $\hat{\bm\theta}_{\mathrm{T}}$ satisfy Condition~\ref{cond:text_identification}, then equality of the induced observed distributions,
\begin{align}
    P_{\bm\theta_{\mathrm{T}}}(\dis)=P_{\hat{\bm\theta}_{\mathrm{T}}}(\dis),
\end{align}
implies that for every level $l\in[\Lt]$, each latent concept component $\sel_l$ is identifiable up to a component-wise relabeling of its discrete states.
\end{restatable}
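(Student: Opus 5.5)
The plan is to prove \Cref{thm:text_identification} by induction over levels $l=1,\dots,\Lt$, processing the hierarchy bottom-up as in the proof roadmap. The inductive hypothesis is that every variable at levels below $l$ (the trace $\dis$ for the base case) is already identified up to component-wise relabeling, together with its adjacency. We then treat these variables as "observed" when recovering level $l$. This works because, under \Cref{cond:text_identification}-\ref{asmp:deterministic_coarsest}, each latent $\sele$ is a deterministic function of $\parents{\sele}$. So once lower-level variables are known up to relabeling, their joint law is a known functional of $P(\dis)$, and the same one-level argument applies verbatim. Relabelings at lower levels only permute rows and columns of the probability tables, so ranks and independencies are unaffected.

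\textbf{One-level recovery of structure.} At a given level, we first use the diagnostic pure parent $D^\star$ from \Cref{cond:text_identification}-\ref{asmp:neighbors}. Condition on the co-parents $C_{D^\star}=c$ and form the table $P(D^\star, R_{D^\star}\mid C_{D^\star}=c)$. Because $\children{D^\star}=\{\sele_l\}$, every path from $D^\star$ to $R_{D^\star}$ passes through $\sele_l$. Hence this table factors through $\sele_l$, and its nonnegative rank is at most $\abs{\supp(\sele_l)}$. The cardinality inequalities in \ref{asmp:neighbors} ensure that both sides of the table are large enough for this to be a genuine rank drop. Rank faithfulness (\ref{asmp:rank_faithfulness}) then ensures that equality holds and that no spurious drops occur. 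This lets us detect the latent bottleneck and its cardinality. Next, we cluster observed variables into those that share a latent neighbor, using rank tests on pairs and sets together with faithfulness (\ref{asmp:faithfulness}) for the d-separation pattern. No-Twins (\ref{asmp:twins}) guarantees that distinct latents induce distinct clusters, and Maximality (\ref{asmp:maximality}) rules out an alternative model that splits a latent further. Together these yield the adjacency at that level, identical for $\bm\theta_{\mathrm T}$ and $\hat{\bm\theta}_{\mathrm T}$.

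\textbf{One-level recovery of states.} To identify the states of $\sele$, we define on parent configurations the relation $(\disl_1,\tilde\disl_1)\equiv_{\mathrm{obs}}(\disl_2,\tilde\disl_2)$ when they induce identical conditional laws of the remaining observed variables. We do this in two stages. First, for a fixed shared-parent context $\tilde\disl$, we form context-specific classes $[\disl]_{\tilde\disl}$ (Step 2a). Then we align these classes across contexts via the observational signature (Step 2b). \Cref{cond:text_identification}-\ref{asmp:minimality} states exactly that the fibers of $f_{\sele}$ coincide with these equivalence classes. Since the classes are defined purely from $P(\dis)$, the two parameterizations induce the same partition of parent configurations. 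Their latent states therefore differ only by a bijection $\pi$ between labels, which is the claimed component-wise relabeling. Restricted choice sets (\ref{asmp:natural_selection}) ensure that $f_{\sele}$ is genuinely non-injective on the product space, so the latent is not trivially the identity. They also ensure that the partition is nontrivial and matches the cardinality detected by the rank step.

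\textbf{Main obstacle.} I expect the hardest step to be the cross-context alignment in Step 2b. When $\sele$ shares parents with other latents, a naive observational class could split one module state across contexts or merge two states. Proving that $\equiv_{\mathrm{obs}}$ is well defined and transitive, and that it agrees between the two parameterizations, is where \ref{asmp:minimality} must do its work, in combination with the pure cue $D^\star$. The idea is that $D^\star$ gives a context-free anchor for matching classes. My first attempt would be to show that the conditional law of $D^\star$-side statistics given a class is invariant across contexts, and then chain these anchors together. A secondary subtlety is making sure that identification errors do not propagate between levels in the induction. This should follow because relabeling is exact rather than approximate.
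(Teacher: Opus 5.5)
Your proposal follows essentially the same route as the paper. Adjacency is recovered at one level from a nonnegative-rank drop at the pure anchor $D^\star$ after conditioning on its co-parents, the latent state is recovered as the two-stage predictive partition of Steps 2a/2b matched to the fibers of $f_{\sele}$, and the whole construction is lifted bottom-up by induction, using deterministic coarsest selection to treat each recovered layer as observed. The only place you overestimate the work is Step 2b: the paper takes Condition (vi) as an exact ``if and only if'' between $\equiv_{\mathrm{obs}}$ and equality of $f_{\sele}$, so the cross-context alignment is assumed rather than proved, and your anchor-chaining argument is unnecessary (it would also generally fail, since the law of $D^\star$ given a state lives on the context-dependent fiber of $f_{\sele}$).
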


\paragraph{Proof idea for Theorem~\ref{thm:text_identification}.}
The proof below implements this roadmap constructively.
At a single level, we (i) recover which observed variables are grouped together by the same latent selection using rank and conditional-independence constraints, and then (ii) recover the latent states as the \emph{coarsest predictive partition} of the corresponding parent configurations.
We then iterate this one-level identification bottom-up to recover the full hierarchy.

\subsection{Proof of Theorem~\ref{thm:text_identification}} \label{app:identification_proof}

\begin{definition}[Non-negative Rank] \label{def:nonnegtive_rank}
    The non-negative rank of a non-negative matrix $\mA\in\R^{m\times n}$ is the smallest integer $p$ for which there exist non-negative matrices $\mB\in\R^{m\times p}$ and $\mC\in\R^{p\times n}$ such that $\mA=\mB\mC$.
\end{definition}

\begin{lemma}[Conditional Independence and Nonnegative Rank~\citep{cohen1993nonnegative}] \label{lemma:nonnegative_rank}
    Let $\mP\in\R^{m\times n}$ be a bivariate probability table.
    Then $\mathrm{rank}_{+}(\mP)$ equals the smallest $p$ such that $\mP$ can be written as a convex combination of $p$ rank-one probability tables.
\end{lemma}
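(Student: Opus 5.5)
We prove the two inequalities between $\mathrm{rank}_{+}(\mP)$ and the quantity
\[
q(\mP):=\min\Big\{p:\ \mP=\textstyle\sum_{k=1}^{p}\lambda_k\,\mathbf{u}_k\mathbf{v}_k^{\top}\Big\},
\]
where $\lambda_k\ge 0$, $\sum_k\lambda_k=1$, and $\mathbf{u}_k\in\R^{m}$, $\mathbf{v}_k\in\R^{n}$ are nonnegative vectors summing to one. Each $\mathbf{u}_k\mathbf{v}_k^{\top}$ is then a rank-one probability table; conversely, every rank-one probability table has this form, after normalizing its two factors. Both directions rest on the column--row expansion of a matrix product, $\mB\mC=\sum_{k=1}^{p}\mathbf{b}_k\mathbf{c}_k^{\top}$, where $\mathbf{b}_k$ is the $k$-th column of $\mB$ and $\mathbf{c}_k^{\top}$ is the $k$-th row of $\mC$.

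\textbf{Step 1: $\mathrm{rank}_{+}(\mP)\le q(\mP)$.} Take a convex decomposition with $p=q(\mP)$ terms. Set $\mB:=[\lambda_1\mathbf{u}_1,\dots,\lambda_p\mathbf{u}_p]\in\R^{m\times p}$ and let $\mC\in\R^{p\times n}$ have rows $\mathbf{v}_1^{\top},\dots,\mathbf{v}_p^{\top}$. Both matrices are entrywise nonnegative, and $\mB\mC=\mP$ by the expansion above. By Definition~\ref{def:nonnegtive_rank}, $\mathrm{rank}_{+}(\mP)\le p$.

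\textbf{Step 2: $q(\mP)\le\mathrm{rank}_{+}(\mP)$.} Let $p=\mathrm{rank}_{+}(\mP)$ and $\mP=\mB\mC$ with $\mB,\mC$ nonnegative, so $\mP=\sum_k\mathbf{b}_k\mathbf{c}_k^{\top}$.
\begin{itemize}
\item \emph{All terms are nonzero.} If some outer product $\mathbf{b}_k\mathbf{c}_k^{\top}$ vanished, deleting column $k$ of $\mB$ and row $k$ of $\mC$ would give a nonnegative factorization with $p-1$ inner dimensions, contradicting minimality. Hence $\beta_k:=\mathbf{1}^{\top}\mathbf{b}_k>0$ and $\gamma_k:=\mathbf{c}_k^{\top}\mathbf{1}>0$ for every $k$. (Also $p\ge1$, since $\mP$ sums to one and so $\mP\neq 0$.)
\item \emph{Normalization.} Define $\lambda_k:=\beta_k\gamma_k$, $\mathbf{u}_k:=\mathbf{b}_k/\beta_k$ and $\mathbf{v}_k:=\mathbf{c}_k/\gamma_k$. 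Then $\mathbf{u}_k$ and $\mathbf{v}_k$ are probability vectors, and $\mathbf{b}_k\mathbf{c}_k^{\top}=\lambda_k\,\mathbf{u}_k\mathbf{v}_k^{\top}$.
\item \emph{Convexity of the weights.} Summing all entries of $\mP$ gives $1=\sum_k\mathbf{1}^{\top}\mathbf{b}_k\mathbf{c}_k^{\top}\mathbf{1}=\sum_k\lambda_k$. So the $\lambda_k$ are nonnegative and sum to one.
\end{itemize}
This exhibits $\mP$ as a convex combination of $p$ rank-one probability tables, so $q(\mP)\le p$.

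Combining Steps 1 and 2 gives $\mathrm{rank}_{+}(\mP)=q(\mP)$.

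The only point needing care is the zero-term issue in Step 2: it is what guarantees $\beta_k,\gamma_k>0$, so that the normalization is well defined. Minimality of $p$ settles it directly, as shown above.
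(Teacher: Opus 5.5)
Your proof is correct and complete. The paper gives no argument for this lemma of its own: it states it with a citation to \citet{cohen1993nonnegative} and uses it as a black box. The useful comparison is therefore between citing and proving. Your write-up makes explicit that the equivalence is bookkeeping. You rescale each column $\mathbf{b}_k$ of $\mB$ and each row $\mathbf{c}_k^{\top}$ of $\mC$ to unit mass and collect the scales $\beta_k\gamma_k$ as mixture weights. Minimality of $p$ is the only extra ingredient, and it is needed only to rule out zero terms so that the rescaling is well defined.

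One line is worth adding to your remark that every rank-one probability table has the form $\mathbf{u}\mathbf{v}^{\top}$ with probability vectors. If $\mathbf{x}\mathbf{y}^{\top}\ge 0$ is nonzero, then all nonzero entries of $\mathbf{x}$ and $\mathbf{y}$ share one sign. You can therefore flip both factors to be nonnegative before normalizing.

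It would also help to state the probabilistic reading behind the lemma's title. Let $X$ and $Y$ be the row and column variables of $\mP$. Then your $\lambda_k$, $\mathbf{u}_k$, $\mathbf{v}_k$ are $\Pr(Z=k)$, $\Pr(X\mid Z=k)$ and $\Pr(Y\mid Z=k)$ for a latent $Z$ with $p$ states under which $X\ind Y\mid Z$. So $\mathrm{rank}_{+}(\mP)$ is the fewest latent states that make the two coordinates conditionally independent. The direction the paper actually uses is in Step~1 of the proof of Lemma~\ref{lemma:one_level_text_identification}: a bottleneck $\sele$ gives $\mathrm{rank}_{+}\le\abs{\supp(\sele)}$. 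That is exactly your Step~1.
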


\begin{lemma}[One-level Textual Concept Identification] \label{lemma:one_level_text_identification}
    Assume the hierarchical selection process \cref{fig:causal_graph} with $\Lt=1$.
    Let the true parameters be $\bm\theta_{\mathrm{T}}$.
    If both the true parameterization $\bm\theta_{\mathrm{T}}$ and an alternative parameterization $\hat{\bm\theta}_{\mathrm{T}}$ satisfy Condition~\ref{cond:text_identification}, then equality of the induced observed distributions $\Pb{\dis}$ implies that the latent concepts $\sel_1$ are component-wise identifiable.
\end{lemma}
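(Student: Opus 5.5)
The plan follows the two-stage recipe announced in the proof idea for Theorem~\ref{thm:text_identification}. First we recover the \emph{grouping} of observed variables into latent selections. Then we recover the \emph{states} of each selection as a coarsest predictive partition. Throughout, $\Lt=1$: every latent $\sele\in\sel_1$ has only observed parents $\parents{\sele}\subseteq\dis$. Both $\bm\theta_{\mathrm{T}}$ and $\hat{\bm\theta}_{\mathrm{T}}$ satisfy Condition~\ref{cond:text_identification} and induce the same $\Pb{\dis}$. So every statement we certify from $\Pb{\dis}$ alone (conditional independences, nonnegative ranks of conditional tables, observational equivalence classes) holds for both parameterizations simultaneously. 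The argument is a uniqueness one: we show that any such statistic determines the latent structure, hence the two structures coincide up to relabeling.

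\textbf{Step 1 (grouping via rank and CI).} For a candidate diagnostic variable $D^\star$ and its co-parents, we form the conditional table $\Pb{D^\star, R_{D^\star}\mid C_{D^\star}=c}$ with the $c$ supplied by Condition~\ref{cond:text_identification}-\ref{asmp:neighbors}. In the selection reading, conditioning on $C_{D^\star}=c$ leaves $D^\star$ linked to $R_{D^\star}$ only through the single latent $\sele$ with $\children{D^\star}=\{\sele\}$. Lemma~\ref{lemma:nonnegative_rank} then bounds $\mathrm{rank}_+$ by $\abs{\supp{\sele}}$. The support inequalities in \ref{asmp:neighbors} ($\abs{\supp{D^\star}}>\abs{\supp{\sele}}$ and the comparison side large enough) make this a genuine rank drop relative to the generic value. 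Rank faithfulness \ref{asmp:rank_faithfulness} says such a drop arises only from a latent bottleneck, and faithfulness \ref{asmp:faithfulness} lets the accompanying CI pattern single out exactly which observed variables lie in $\parents{\sele}$. No-Twins \ref{asmp:twins} prevents two latents from producing identical groupings, and Maximality \ref{asmp:maximality} prevents the alternative model from splitting one group into several. Together these give a bijection between the latent components of $\bm\theta_{\mathrm{T}}$ and $\hat{\bm\theta}_{\mathrm{T}}$ that preserves adjacency.

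\textbf{Step 2 (states as coarsest predictive partition).} With $\parents{\sele}=\dis^{\sele}\cup\tilde{\dis}^{\sele}$ now fixed, we proceed in two sub-steps.

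(2a) For each shared-parent context $\tilde{\disl}^{\sele}$, define the class $[\disl^{\sele}]_{\tilde{\disl}^{\sele}}$ by grouping values of $\disl^{\sele}$ that induce the same conditional law of the remaining observed variables.

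(2b) Define $\equiv_{\mathrm{obs}}$ on pairs (class, context) by equality of these predictive signatures across contexts.

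By Condition~\ref{cond:text_identification}-\ref{asmp:deterministic_coarsest}, $\sele=f_{\sele}(\disl^{\sele},\tilde{\disl}^{\sele})$ is the coarsest deterministic statistic. By \ref{asmp:minimality}, the level sets of $f_{\sele}$ are exactly the $\equiv_{\mathrm{obs}}$ classes. Since these classes are computed from $\Pb{\dis}$, the same holds for $\hat f_{\sele}$. Hence $\hat f_{\sele}=\pi\circ f_{\sele}$ for a bijection $\pi$ of state labels. Restricted choice sets \ref{asmp:natural_selection} guarantee the partition is nontrivial, so $\sele$ is not just a relabeled copy of its parents and the identification is meaningful.

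\textbf{Main obstacle.} I expect Step~1 to be the delicate part. Specifically, we must justify that conditioning on $C_{D^\star}=c$ in a \emph{selection} (bottom-up) model really yields a factorization through $\sele$. Conditioning on selection nodes can open paths, so we must verify via d-separation that the only remaining dependence between $D^\star$ and $R_{D^\star}$ goes through $\sele$, and that rank faithfulness rules out accidental rank drops caused by other coparent configurations. I would first check this on the pure-child structure guaranteed by \ref{asmp:neighbors}, then argue that the CI tests from faithfulness prune everything outside the local neighborhood.
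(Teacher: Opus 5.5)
Your proposal follows the same route as the paper's proof. A strict nonnegative-rank drop on the conditional table of a pure anchor (Lemma~\ref{lemma:nonnegative_rank} with Condition~\ref{cond:text_identification}-\ref{asmp:neighbors} and rank faithfulness), together with No-Twins, recovers the bipartite adjacency; context-specific classes aligned by $\equiv_{\mathrm{obs}}$ then recover each state, with Condition~\ref{cond:text_identification}-\ref{asmp:minimality} making them exactly the level sets of $f_{\sele}$, and the paper likewise just asserts the factorization-through-$\sele$ step you flag as delicate. Where the paper is more concrete is in locating the co-parents without knowing them in advance: it takes an inclusion-minimal conditioning set $\mC$ that produces the strict rank drop and shows it equals $\coparents{\disc}$ (rank faithfulness for one inclusion, d-separation plus minimality for the other), which is the concrete mechanism behind your appeal to a ``CI pattern.''
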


\begin{proof}
We prove identifiability for $\Lt=1$ by explicitly reconstructing (a) the bipartite adjacency between $\sel_1$ and the observed variables $\dis$ and (b) the discrete state of each latent component as a function of its observed neighborhood.

\paragraph{Step 1: Recover the bipartite adjacency $\dis \leftrightarrow \sel_1$.}
Fix an observed variable $\disc\in\dis$.
For any candidate conditioning set $\mC\subseteq\dis\setminus\{\disc\}$, define the remaining observed block $\mR:=\dis\setminus(\{\disc\}\cup\mC)$ and consider the family of conditional probability tables
\begin{align}
    \mT^{(\mC=\cc)}_{\disc,\mR} \;:=\; \Pb{\disc,\mR\mid \mC=\cc},\qquad \cc\in\supp(\mC).
\end{align}

\emph{Covered anchor variables create an informative rank drop.}
Suppose $\disc$ is a pure parent for some latent variable $\sele\in\sel_1$ as in Condition~\ref{cond:text_identification}-\ref{asmp:neighbors}.
If we choose $\mC$ to include all other observed parents of $\sele$ (i.e., $\mC\supseteq \coparents{\disc}$), then, conditioned on $\mC$, the dependence between $\disc$ and $\mR$ factors through the finite latent variable $\sele$.
By Lemma~\ref{lemma:nonnegative_rank}, each table $\mT^{(\mC=\cc)}_{\disc,\mR}$ is then a mixture of $\abs{\supp(\sele)}$ rank-one tables, so
\begin{align}
    \mathrm{rank}_{+}\!\left(\mT^{(\mC=\cc)}_{\disc,\mR}\right)\ \le\ \abs{\supp(\sele)}\ <\ \abs{\supp(\disc)},
\end{align}
where the strict inequality uses Condition~\ref{cond:text_identification}-\ref{asmp:neighbors}.
When $\mC=\coparents{\disc}$, the rank-table support part of the same condition ensures that the comparison block $\mR$ has enough support for this strict inequality to be an informative bottleneck rather than a dimensional ceiling of the table.
Thus, anchors are detectable from the observed law via a strict nonnegative-rank drop.

\emph{Minimal separating sets identify co-parents.}
Define $\widehat{\coparents{\disc}}$ as any \emph{inclusion-minimal} set $\mC\subseteq\dis\setminus\{\disc\}$ for which there exists $\cc\in\supp(\mC)$ such that
\begin{align}
    \mathrm{rank}_{+}\!\left(\mT^{(\mC=\cc)}_{\disc,\mR}\right) < \abs{\supp(\disc)}.
    \label{eq:conditional_independence_test}
\end{align}
We claim that for a pure parent $\disc$ this minimal set coincides with the true co-parent set: $\widehat{\coparents{\disc}}=\coparents{\disc}$.

First, $\coparents{\disc}\subseteq \widehat{\coparents{\disc}}$.
If some co-parent were omitted from $\mC$, then (by faithfulness) there would remain an active path between $\disc$ and that omitted variable given $\mC$.
If the strict rank drop in \eqref{eq:conditional_independence_test} still held after omitting such a co-parent, that nonnegative-rank constraint would be an accidental constraint not entailed by the latent DAG with the stated cardinalities, contradicting rank faithfulness in Condition~\ref{cond:text_identification}-\ref{asmp:rank_faithfulness}.

Second, $\widehat{\coparents{\disc}}\subseteq \coparents{\disc}$.
Any variable outside $\coparents(\disc)$ is d-separated from $\disc$ once we condition on the true co-parent set and the associated latent child, hence removing such an extraneous variable from $\mC$ cannot destroy the rank drop.
By the inclusion-minimality in the definition of $\widehat{\coparents{\disc}}$, no such extraneous variable can appear.

Repeating this procedure over observed variables detects a covered anchor for each latent node and identifies which observed variables share that latent neighbor.
Condition~\ref{cond:text_identification}-\ref{asmp:twins} (No-Twins) ensures that the resulting grouping defines a unique latent node for each distinct neighborhood, yielding the bipartite adjacency between $\dis$ and $\sel_1$.

\paragraph{Step 2: Recover the latent state as a predictive partition.}
Fix a latent variable $\sele\in\sel_1$ and write its observed parents as
\begin{align}
    \parents{\sele} = \dis^{\sele}\ \cup\ \tilde{\dis}^{\sele},
\end{align}
where $\dis^{\sele}$ are the pure parents (anchors) and $\tilde{\dis}^{\sele}$ are the remaining (shared) parents.
Let $\mU:=\dis\setminus\parents{\sele}$ denote the observed variables outside the neighborhood of $\sele$.
The Markov property for the latent graph implies the conditional independence
\begin{align}
    \dis^{\sele}\ \ind\ \mU\ \big|\ (\sele,\tilde{\dis}^{\sele}).
    \label{eq:ci_pure_parents}
\end{align}

\emph{Step 2a: merge anchor configurations inside a fixed shared-parent context.}
Fix any $\tilde{\disl}^{\sele}\in\supp(\tilde{\dis}^{\sele})$.
Define an equivalence relation on anchor assignments by
\begin{align}
    \begin{split}
        &\qquad \disl^{\sele}_1 \sim_{\tilde{\disl}^{\sele}} \disl^{\sele}_2 \Longleftrightarrow \\
        &\Pb{\mU\mid \dis^{\sele}=\disl^{\sele}_1,\ \tilde{\dis}^{\sele}=\tilde{\disl}^{\sele}} = \\
        &\Pb{\mU\mid \dis^{\sele}=\disl^{\sele}_2,\ \tilde{\dis}^{\sele}=\tilde{\disl}^{\sele}}.
    \end{split}
\end{align}
By \eqref{eq:ci_pure_parents}, the conditional law of $\mU$ depends on $\dis^{\sele}$ only through the latent value $\sele$ when $\tilde{\dis}^{\sele}$ is fixed.
Hence each equivalence class corresponds to a context-specific latent-state label.
For an anchor assignment $\disl^{\sele}$ in the fixed context $\tilde{\disl}^{\sele}$, let $[\disl^{\sele}]_{\tilde{\disl}^{\sele}}$ denote the resulting class index.

\emph{Step 2b: align these class indices across different shared-parent contexts.}
The shared parents $\tilde{\dis}^{\sele}$ may also be parents of other latent variables.
Let $\children{\tilde{\dis}^{\sele}}$ denote all latent children of the shared-parent set, and define the additional observed parents of these children by
\begin{align}
    \tilde{\tilde{\dis}}^{\sele} 
    \;:=\; \parents{\children{\tilde{\dis}^{\sele}}}\setminus\{\dis^{\sele},\tilde{\dis}^{\sele}\}.
\end{align}
Conditioning on $\children{\tilde{\dis}^{\sele}}$ and their other observed parents blocks all paths from $(\dis^{\sele},\tilde{\dis}^{\sele})$ to variables outside their joint neighborhood, yielding
\begin{align}
    (\dis^{\sele},\tilde{\dis}^{\sele})\ \ind\ \dis\setminus\parents{\children{\tilde{\dis}^{\sele}}}
    \ \big|\ \big(\children{\tilde{\dis}^{\sele}},\ \tilde{\tilde{\dis}}^{\sele}\big).
    \label{eq:ci_hybrid_parents}
\end{align}
We now align the context-specific labels across different shared-parent values by comparing the observable conditional laws implied by \eqref{eq:ci_hybrid_parents}.
Concretely, we define exact observational equivalence by declaring
\begin{align}
    ([\disl^{\sele}_1]_{\tilde{\disl}^{\sele}_1},\tilde{\disl}^{\sele}_1)
    \equiv_{\mathrm{obs}}
    ([\disl^{\sele}_2]_{\tilde{\disl}^{\sele}_2},\tilde{\disl}^{\sele}_2)
\end{align}
if for every $\tilde{\tilde{\disl}}^{\sele}\in\supp(\tilde{\tilde{\dis}}^{\sele})$,
\begin{equation}
\begin{adjustbox}{max width=\columnwidth}
$\displaystyle
\begin{aligned}
\Pb{\dis\setminus\parents{\children{\tilde{\dis}^{\sele}}}
\mid ([\dis^{\sele}]_{\tilde{\dis}^{\sele}},\tilde{\dis}^{\sele})=([\disl^{\sele}_1]_{\tilde{\disl}^{\sele}_1},\tilde{\disl}^{\sele}_1),\ \tilde{\tilde{\dis}}^{\sele}=\tilde{\tilde{\disl}}^{\sele}}
\\
=\Pb{\dis\setminus\parents{\children{\tilde{\dis}^{\sele}}}
\mid ([\dis^{\sele}]_{\tilde{\dis}^{\sele}},\tilde{\dis}^{\sele})=([\disl^{\sele}_2]_{\tilde{\disl}^{\sele}_2},\tilde{\disl}^{\sele}_2),\ \tilde{\tilde{\dis}}^{\sele}=\tilde{\tilde{\disl}}^{\sele}}.
\end{aligned}
$
\end{adjustbox}
\end{equation}
By Condition~\ref{cond:text_identification}-\ref{asmp:minimality}, this observable equivalence relation coincides with equality of the true deterministic state of $\sele$.
Let $[(\dis^{\sele},\tilde{\dis}^{\sele})]_{\mathrm{obs}}$ denote the resulting equivalence class.
We define the recovered latent variable $\hat{\sele}$ as the class index
\begin{align}
    \hat{\sele} := \hat f_{\sele}(\dis^{\sele},\tilde{\dis}^{\sele}) := [(\dis^{\sele},\tilde{\dis}^{\sele})]_{\mathrm{obs}}.
\end{align}
By construction, $\hat{\sele}$ is a deterministic function of the observed neighborhood.

\paragraph{Step 3: $\hat{\sele}$ matches $\sele$ up to relabeling.}
Condition~\ref{cond:text_identification}-\ref{asmp:minimality} states that the observational equivalence classes constructed in Step~2b are exactly the level sets of the true deterministic map
$f_{\sele}(\dis^{\sele},\tilde{\dis}^{\sele})$.
Thus two parent configurations receive the same recovered state if and only if they induce the same true state of $\sele$.
The recovered partition therefore coincides with the true partition of parent configurations, which implies that $\hat{\sele}$ equals $\sele$ up to a bijection on its discrete state labels.

Applying the above construction to each component of $\sel_1$ completes the one-level identifiability claim.
\end{proof}

\textidentificationconditions*
\textidentification*

\begin{proof}
We lift Lemma~\ref{lemma:one_level_text_identification} to the full hierarchy via induction over levels.
At the bottom, take the observed trace variables $\dis$ as the level-$(\Lt+1)$ ``inputs''.
Lemma~\ref{lemma:one_level_text_identification} identifies the first latent layer $\sel_{\Lt}$ (up to component-wise relabeling) together with its adjacency to $\dis$.
Because Condition~\ref{cond:text_identification}-\ref{asmp:deterministic_coarsest} identifies each recovered latent layer as a sample-level deterministic statistic of the layer below, the recovered $\sel_{\Lt}$ can be treated as observed after relabeling.
Once $\sel_{\Lt}$ is identified, we can treat it as observed and apply the same argument to identify $\sel_{\Lt-1}$, and so on.
Iterating up to level $1$ yields identifiability of every layer $\sel_l$.
\end{proof}

\subsection{Proof of Theorem~\ref{thm:generalization}}

\begin{proof}
We prove that the local witness condition in \Cref{thm:generalization} is sufficient for composing locally witnessed constraints under a compositionally novel prompt.

\paragraph{Setup and notation.}
Fix $p_{\mathrm{new}}\in\Dcomp$.
Recall the training-witnessed local compatibility set from \eqref{eq:local_witness_set}:
for a node $U$ and a child tuple $v_{\children{U}}$,
$\mathcal{W}_U(v_{\children{U}})$ is the set of parent values $u$ such that the assignment $(U=u,\children{U}=v_{\children{U}})$ occurs with nonzero probability under some training prompt $p\in\Ds$.
Fix a high-level latent configuration in the support of $p_{\mathrm{new}}$ and consider the full descendant assignment induced by the hierarchical selection model.
All local tuples below are therefore induced by this single global assignment, rather than chosen independently across nodes.

\paragraph{Composing witnesses layer by layer.}
Because the selection hierarchy is acyclic and layered (\Cref{fig:causal_graph}), we can construct lower levels one layer at a time.

Consider any node $U$ whose children have already been assigned a tuple $v_{\children{U}}$.
Let $u$ be the value of $U$ induced by the same fixed assignment.
If this local family can arise under $p_{\mathrm{new}}$, then by the local witness condition \eqref{eq:witness_intersection} we have
\begin{align}
    u\in \mathcal{W}_U\big(v_{\children{U}}\big).
\end{align}
By the definition of $\mathcal{W}_U$, this value $u$ has been witnessed together with the \emph{full} child tuple $v_{\children{U}}$ under some training prompt, so it is simultaneously compatible with every child of $U$.

Proceeding recursively over all nodes in descending layer order yields a full assignment in which every local parent--children family is witnessed on the training support.
Thus the novel prompt does not require any genuinely new local interface beyond what training has already exercised.

\paragraph{Implication for learned models.}
Any learned model that recovers the local witness relations $\mathcal{W}_U(\cdot)$ from training can realize the above construction by selecting, at each node $U$, the induced value $u\in\mathcal{W}_U(v_{\children{U}})$.
Hence it can form a consistent composition under $p_{\mathrm{new}}$.
\end{proof}

\section{Formal Support View of SFT and RL}
\label{app:sft_rl_support_theory}

This section formalizes the support-level intuition behind Proposition~\ref{prop:rlvr_enrichment}.
We prove two complementary facts.
First, SFT can leave an unfalsifiable support blind spot: if demonstrations reveal only a subset of valid traces, then an SFT-only learner cannot distinguish the true trace law from an alternative law that treats the visible subset as complete.
Second, RL can make useful hidden events statistically visible when rollouts can reach them and the final-answer reward distinguishes them from unhelpful variants.

\paragraph{Setup.}
Fix a prompt set $\mathcal{C}_0$ with prompt marginal $\mu^\star$ and true prompt-trace law
\begin{align}
    Q^\star(p,d)=\mu^\star(p)\,p^\star(d\mid p).
\end{align}
For each prompt $p$, suppose SFT reveals only a subset $A_S(p)$ of valid traces, with true mass
\begin{align}
    s(p):=\sum_d p^\star(d\mid p)\mathbf{1}\{d\in A_S(p)\}.
\end{align}
The SFT-visible conditional is
\begin{align}
    q_S(d\mid p)
    :=
    \frac{p^\star(d\mid p)\mathbf{1}\{d\in A_S(p)\}}{s(p)}.
\end{align}
We assume $s(p)>0$ on the prompts under consideration.
RL rollouts receive a binary final-answer reward $R\in\{0,1\}$.

\paragraph{Identifiability-critical events.}
Our identification and composition conditions require the learner to observe specific trace-level configurations.
Examples include a module in a particular neighboring context, as in Condition~\ref{cond:text_identification}-\ref{asmp:neighbors}, or a parent value co-occurring with a specific child tuple, as in the local witness condition of Theorem~\ref{thm:generalization}.
As in the main text, we write such a local event as $\cE=\{U=u,\children{U}=v_{\children{U}}\}$.

\begin{theorem}[SFT non-identifiability under hidden trace support]
\label{thm:sft_hidden_support}
Define the alternative law
\begin{align}
    \widetilde Q(p,d):=\mu^\star(p)\,q_S(d\mid p),
\end{align}
which treats the SFT-visible traces as the full conditional trace distribution.
Then:
\begin{enumerate}[label=(\roman*),leftmargin=1.5em,itemsep=0.15em,topsep=0.15em]
    \item $Q^\star$ and $\widetilde Q$ induce identical SFT observations.
    \item Their trace laws differ by the hidden trace mass,
\begin{align}
    \|Q^\star-\widetilde Q\|_1
    =
    2\,\mathbb{E}_{\mu^\star}[1-s(p)].
\end{align}
    \item Any identifiability-critical event that occurs only outside $A_S(p)$ is absent under $\widetilde Q$ and therefore cannot be certified from SFT observations alone.
\end{enumerate}
\end{theorem}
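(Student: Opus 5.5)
The proof is a direct computation from the definitions. The one modeling choice to fix is what ``SFT observations'' means. I will formalize it as the law of the observed pairs $(p,d)$ restricted to visible traces, i.e.\ the sub-probability $Q(p,d)\mathbf{1}\{d\in A_S(p)\}$ up to per-prompt renormalization. Equivalently, it is the prompt marginal together with the conditional law of $d$ given $p$ and $d\in A_S(p)$. This is exactly what a learner fitting demonstrations drawn from $A_S(p)$ can access.

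For (i), I compute both objects under each law. Under $Q^\star$, the prompt marginal is $\mu^\star$. The conditional given $d\in A_S(p)$ is $p^\star(d\mid p)\mathbf{1}\{d\in A_S(p)\}/s(p)=q_S(d\mid p)$, using $s(p)>0$.

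Under $\widetilde Q$, the prompt marginal is again $\mu^\star$, because $q_S(\cdot\mid p)$ sums to one. Since $q_S$ is supported on $A_S(p)$, the event $\{d\in A_S(p)\}$ has probability one under $\widetilde Q(\cdot\mid p)$. Conditioning on it therefore returns $q_S$ itself, so the two SFT observation laws coincide.

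For (ii), I write
\begin{align}
\|Q^\star-\widetilde Q\|_1=\sum_p\mu^\star(p)\sum_d\bigl|p^\star(d\mid p)-q_S(d\mid p)\bigr|
\end{align}
and split the inner sum over $d\in A_S(p)$ and $d\notin A_S(p)$.

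On $A_S(p)$, we have $q_S=p^\star/s(p)\ge p^\star$. The contribution there is $\sum_{d\in A_S(p)}p^\star(d\mid p)\bigl(1/s(p)-1\bigr)=1-s(p)$.

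Off $A_S(p)$, we have $q_S=0$. The contribution there is $\sum_{d\notin A_S(p)}p^\star(d\mid p)=1-s(p)$.

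Summing the two pieces gives $2(1-s(p))$ per prompt, and averaging over $\mu^\star$ yields $2\,\mathbb{E}_{\mu^\star}[1-s(p)]$.

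For (iii), let $\cE$ be an identifiability-critical event realized only by traces outside $A_S(p)$, i.e.\ every $d$ witnessing $\cE$ under prompt $p$ has $d\notin A_S(p)$. Then $\widetilde Q(\cE\mid p)=\sum_{d\in\cE}q_S(d\mid p)=0$, while $Q^\star(\cE\mid p)>0$ may hold. By (i), any statistic computed from SFT observations takes the same value under both laws. No SFT-based procedure can therefore decide whether $\cE$ has positive mass, so $\cE$ cannot be certified.

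The only real obstacle is the formalization step. Part (i) holds only once ``SFT observations'' is pinned down as the conditional law on $A_S(p)$ together with the prompt marginal. I will state this convention explicitly at the start of the proof. Parts (ii) and (iii) then follow by the computation above.
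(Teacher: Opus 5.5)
Your proposal is correct and follows the paper's proof essentially step for step. For (i), you identify the SFT observation law under both $Q^\star$ and $\widetilde Q$ with $\mu^\star$ together with $q_S(\cdot\mid p)$. For (ii), you split the per-prompt $\ell_1$ sum over $A_S(p)$ and its complement, obtaining $(1-s(p))+(1-s(p))$. For (iii), you use $q_S(\cE\mid p)=0$ combined with (i). Your explicit convention that SFT observations consist of the prompt marginal and the renormalized conditional on $A_S(p)$, so that the visible mass $s(p)$ is not itself observed, is exactly what the paper's one-line argument for (i) tacitly assumes. Stating it is a worthwhile clarification.
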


\begin{proof}
Under $Q^\star$, the SFT observation process reveals traces according to $q_S(d\mid p)$.
Under $\widetilde Q$, the full conditional trace law is exactly $q_S(d\mid p)$.
Therefore both laws produce the same supervised observations for every prompt, proving (i).

For (ii), condition on a fixed prompt $p$ and compute the $\ell_1$ distance between the true conditional and the visible conditional:
\begin{align}
    &\sum_d \left|p^\star(d\mid p)-q_S(d\mid p)\right| \notag\\
    &=
    \sum_{d\in A_S(p)}
    p^\star(d\mid p)\left|\frac{1}{s(p)}-1\right|
    +
    \sum_{d\notin A_S(p)}p^\star(d\mid p) \notag\\
    &=
    (1-s(p))+(1-s(p))
    =
    2(1-s(p)).
\end{align}
Averaging over $p\sim\mu^\star$ gives the claimed identity.

For (iii), if an event $\cE$ occurs only on traces outside $A_S(p)$, then $q_S(\cE\mid p)=0$ and therefore $\widetilde Q(\cE\mid p)=0$.
Since SFT observations are identical under $Q^\star$ and $\widetilde Q$, an SFT-only test cannot tell whether $\cE$ is truly present in hidden valid traces or absent from the trace law.
If such events are required for the neighborhood coverage or local witness conditions, the obstruction is non-identifiability rather than finite-sample error.
\end{proof}

\paragraph{Interpretation.}
Even with infinite SFT data, the learner cannot distinguish $Q^\star$ from $\widetilde Q$ using supervised observations alone.
The mass $1-s(p)$ consists of within-prompt traces that SFT never reveals.
If identifiability-critical configurations live in that hidden mass, no SFT-only learner can certify the module neighborhoods or interfaces needed by the theory.

\begin{theorem}[RL enrichment via verifiable reward]
\label{thm:rl_recovery_support}
Let $\hat p_0(\mD\mid\mP)$ denote the current rollout model before the local update.
In this appendix, $\hat p_0(\cE\mid p)$ and $\Pr(R=1\mid p)$ abbreviate conditioning on $\mP=p$.
For an identifiability-critical event $\cE$, define its within-prompt reward gap
\begin{align}
    \Delta_{\cE}(p)
    :=
    \Pr(R=1\mid \cE,p)
    -
    \Pr(R=1\mid \cE^c,p).
\end{align}
If $0<\hat p_0(\cE\mid p)<1$, $\Delta_{\cE}(p)>0$, and $\Pr(R=1\mid p)>0$, then reward-positive rollouts enrich $\cE$:
\begin{align}
    &\Pr(\cE\mid R=1,p)
    -
    \hat p_0(\cE\mid p) \notag\\
    &\quad =
    \frac{
        \hat p_0(\cE\mid p)
        \bigl(1-\hat p_0(\cE\mid p)\bigr)
        \Delta_{\cE}(p)
    }
    {\Pr(R=1\mid p)}
    >0.
\end{align}
Moreover, a local policy-gradient update that increases the log-probability of traces containing $\cE$ has positive expected reward gradient
\begin{align}
    \left.\frac{d}{d\theta}J(\theta)\right|_{\theta=0}
    =
    \mathbb{E}_{p\sim\mu^\star}
    \left[
        \hat p_0(\cE\mid p)
        \bigl(1-\hat p_0(\cE\mid p)\bigr)
        \Delta_{\cE}(p)
    \right],
\end{align}
where $J(\theta)$ is the expected final-answer reward under the locally tilted rollout law
\begin{align}
    \hat p_\theta(d\mid p)
    :=
    \frac{\hat p_0(d\mid p)\exp(\theta\mathbf{1}\{d\in \cE\})}
    {\sum_{d'}\hat p_0(d'\mid p)\exp(\theta\mathbf{1}\{d'\in \cE\})}.
\end{align}
\end{theorem}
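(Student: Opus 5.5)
The statement has two parts: an exact enrichment identity and a derivative formula. I would prove both by treating $\cE$ as a binary indicator of the rollout and applying Bayes' rule under the fixed prompt $p$. Throughout, write $a:=\hat p_0(\cE\mid p)$, $r_1:=\Pr(R=1\mid\cE,p)$ and $r_0:=\Pr(R=1\mid\cE^c,p)$, so that $\Delta_{\cE}(p)=r_1-r_0$. All probabilities are taken under the rollout law $\hat p_0$ together with the verifier. Because $0<a<1$, both conditionals are well defined.

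\textbf{Step 1 (enrichment identity).} By the law of total probability,
\begin{align}
\Pr(R=1\mid p)=a r_1+(1-a)r_0 ,
\end{align}
and this is positive by hypothesis. Bayes' rule gives
\begin{align}
\Pr(\cE\mid R=1,p)=\frac{a r_1}{a r_1+(1-a)r_0}.
\end{align}
Subtracting $a$ and putting everything over the common denominator, the numerator is
\begin{align}
a r_1-a\bigl(a r_1+(1-a)r_0\bigr)=a(1-a)(r_1-r_0)=a(1-a)\Delta_{\cE}(p).
\end{align}
This is exactly the claimed expression. Strict positivity follows because $a(1-a)>0$, $\Delta_{\cE}(p)>0$, and the denominator is positive.

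\textbf{Step 2 (tilted-policy gradient).} Under the tilt, the conditional law of traces within $\cE$, and within $\cE^c$, is the same as under $\hat p_0$; only the mass of $\cE$ changes. That mass becomes
\begin{align}
a_\theta=\frac{a e^\theta}{a e^\theta+1-a}.
\end{align}
The reward is a function of the trace, so $r_1$ and $r_0$ do not depend on $\theta$. Hence, for each prompt,
\begin{align}
J_p(\theta)=a_\theta r_1+(1-a_\theta)r_0 .
\end{align}
Differentiating the logistic form gives $\frac{d}{d\theta}a_\theta\big|_{\theta=0}=a(1-a)$, so $J_p'(0)=a(1-a)\Delta_{\cE}(p)$. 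Setting $J(\theta)=\mathbb{E}_{p\sim\mu^\star}[J_p(\theta)]$ and interchanging the derivative with the expectation then yields the stated formula. The interchange is justified on a finite prompt set, and also whenever the integrand is bounded, since $|J_p'|\le 1/4$.

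\textbf{Main obstacle.} The algebra is routine. The one point needing care is the claim in Step 2 that the tilt leaves the within-$\cE$ and within-$\cE^c$ conditionals unchanged; this is what lets $r_1$ and $r_0$ be treated as constants, and I would verify it directly from the definition of $\hat p_\theta$. I would also remark that the statement does not require $\Delta_{\cE}(p)>0$ at every $p$ for the formula to hold. Positivity of the gradient in expectation needs that sign condition (or at least the expectation being positive), which is what the main-text informal Proposition~\ref{prop:rlvr_enrichment} assumes descriptor-wise.
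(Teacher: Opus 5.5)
Your proof is correct. Step 1 is the same Bayes-rule computation the paper uses, with the same numerator $a(1-a)(r_1-r_0)$. Step 2 takes a genuinely different route.

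\textbf{The paper's route.} It differentiates $\log\hat p_\theta(d\mid p)$ at $\theta=0$ to get the score $\mathbf{1}\{d\in\cE\}-\hat p_0(\cE\mid p)$. It then applies the REINFORCE identity and recognizes $J_p'(0)$ as $\operatorname{Cov}(R,\mathbf{1}\{d\in\cE\}\mid p)$, which reduces to the same algebra as Step 1.

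\textbf{Your route.} You observe that the exponential tilt only reweights the Bernoulli mass of $\cE$ and leaves the within-$\cE$ and within-$\cE^c$ conditionals fixed. This gives the closed form $J_p(\theta)=r_0+a_\theta\Delta_{\cE}(p)$ with $a_\theta$ logistic.

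\textbf{What each buys.} Your version gives $J_p$ along the whole tilt path. So it shows $J_p$ is strictly increasing in $\theta$ whenever $\Delta_{\cE}(p)>0$, a global rather than merely infinitesimal improvement. Your bound $|J_p'|\le 1/4$ also justifies exchanging the derivative with the $\mu^\star$-expectation, which the paper passes over silently. The paper's version connects directly to the score-function estimator that policy-gradient RL actually uses. It also gives the covariance reading of the signal as a contrast between reward-positive and reward-negative rollouts.

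\textbf{Your closing remark.} It is accurate. The hypotheses are stated at a single $p$, so positivity of the averaged gradient needs more. It suffices to have $\Delta_{\cE}\ge 0$, with strict inequality on a set of positive $\mu^\star$-mass where $0<\hat p_0(\cE\mid p)<1$. The paper's proof assumes this implicitly.
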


\begin{proof}
Let
\begin{align}
    a(p)&:=\Pr(R=1\mid \cE,p),\\
    b(p)&:=\Pr(R=1\mid \cE^c,p).
\end{align}
Then $\Delta_{\cE}(p)=a(p)-b(p)$, and the prompt-level reward probability is
\begin{align}
    \Pr(R=1\mid p)
    =
    \hat p_0(\cE\mid p)a(p)
    +
    \bigl(1-\hat p_0(\cE\mid p)\bigr)b(p).
\end{align}
Bayes' rule gives
\begin{align}
    \Pr(\cE\mid R=1,p)
    =
    \frac{
        \hat p_0(\cE\mid p)a(p)
    }
    {\Pr(R=1\mid p)}.
\end{align}
Subtracting $\Pr(\cE\mid p)=\hat p_0(\cE\mid p)$ yields
\begin{align}
    &\Pr(\cE\mid R=1,p)
    -
    \hat p_0(\cE\mid p) \notag\\
    &=
    \frac{\hat p_0(\cE\mid p)}
    {\Pr(R=1\mid p)}
    \Big(a(p)-\hat p_0(\cE\mid p)a(p) \notag\\
    &\qquad
    -\bigl(1-\hat p_0(\cE\mid p)\bigr)b(p)\Big)
    \notag\\
    &=
    \frac{
        \hat p_0(\cE\mid p)
        \bigl(1-\hat p_0(\cE\mid p)\bigr)
    }
    {\Pr(R=1\mid p)}
    \big(a(p)-b(p)\big)
    \notag\\
    &=
    \frac{
        \hat p_0(\cE\mid p)
        \bigl(1-\hat p_0(\cE\mid p)\bigr)
    }
    {\Pr(R=1\mid p)}
    \Delta_{\cE}(p).
\end{align}
This proves enrichment and also shows that positive and negative traces provide a contrastive signal whenever $\Delta_{\cE}(p)>0$.

For the policy-gradient statement, differentiate the local tilt at $\theta=0$:
\begin{align}
    \left.\frac{d}{d\theta}\log \hat p_\theta(d\mid p)\right|_{\theta=0}
    =
    \mathbf{1}\{d\in \cE\}
    -
    \hat p_0(\cE\mid p).
\end{align}
For a fixed prompt, write $J_p(\theta):=\mathbb{E}_{d\sim\hat p_\theta(\cdot\mid p),R}[R]$.
The REINFORCE identity gives
\begin{align}
    \left.\frac{d}{d\theta}J_p(\theta)\right|_{\theta=0}
    &=
    \mathbb{E}_{d\sim\hat p_0(\cdot\mid p),R}
    \left[
        R\left(
            \mathbf{1}\{d\in \cE\}
            -
            \hat p_0(\cE\mid p)
        \right)
        \mid p
    \right]
    \notag\\
    &=
    \operatorname{Cov}\bigl(
        R,\mathbf{1}\{d\in \cE\}\mid p
    \bigr)
    \notag\\
    &=
    \hat p_0(\cE\mid p)
    \bigl(1-\hat p_0(\cE\mid p)\bigr)
    \notag\\
    &\qquad {}\times
    \Delta_{\cE}(p).
\end{align}
Averaging over $p\sim\mu^\star$ gives the displayed gradient formula.
Thus, when $\cE$ is reachable and reward-informative, RL has a positive local ascent direction for increasing the probability of traces containing $\cE$.
\end{proof}

\paragraph{When RL has an advantage over SFT.}
The two results identify three governing factors.
First, \emph{censorship severity}: larger hidden mass $1-s(p)$ creates a wider SFT non-identifiability gap.
Second, \emph{reachability}: if $\hat p_0(\cE\mid p)=0$, rollouts never visit the missing event and RL cannot make it statistically visible.
Third, \emph{reward informativeness}: if $\Delta_{\cE}(p)=0$, reward does not distinguish traces containing $\cE$ from traces omitting it, so RL reduces to undirected exploration for that event.
When all three align, RL can expose and upweight useful local configurations that SFT demonstrations leave hidden.

\section{Toy Diagnostic for Trace-Support Recovery}
\label{sec:app_tv_diagnostic}

\begin{figure}[t]
\centering
\includegraphics[width=1.0\linewidth]{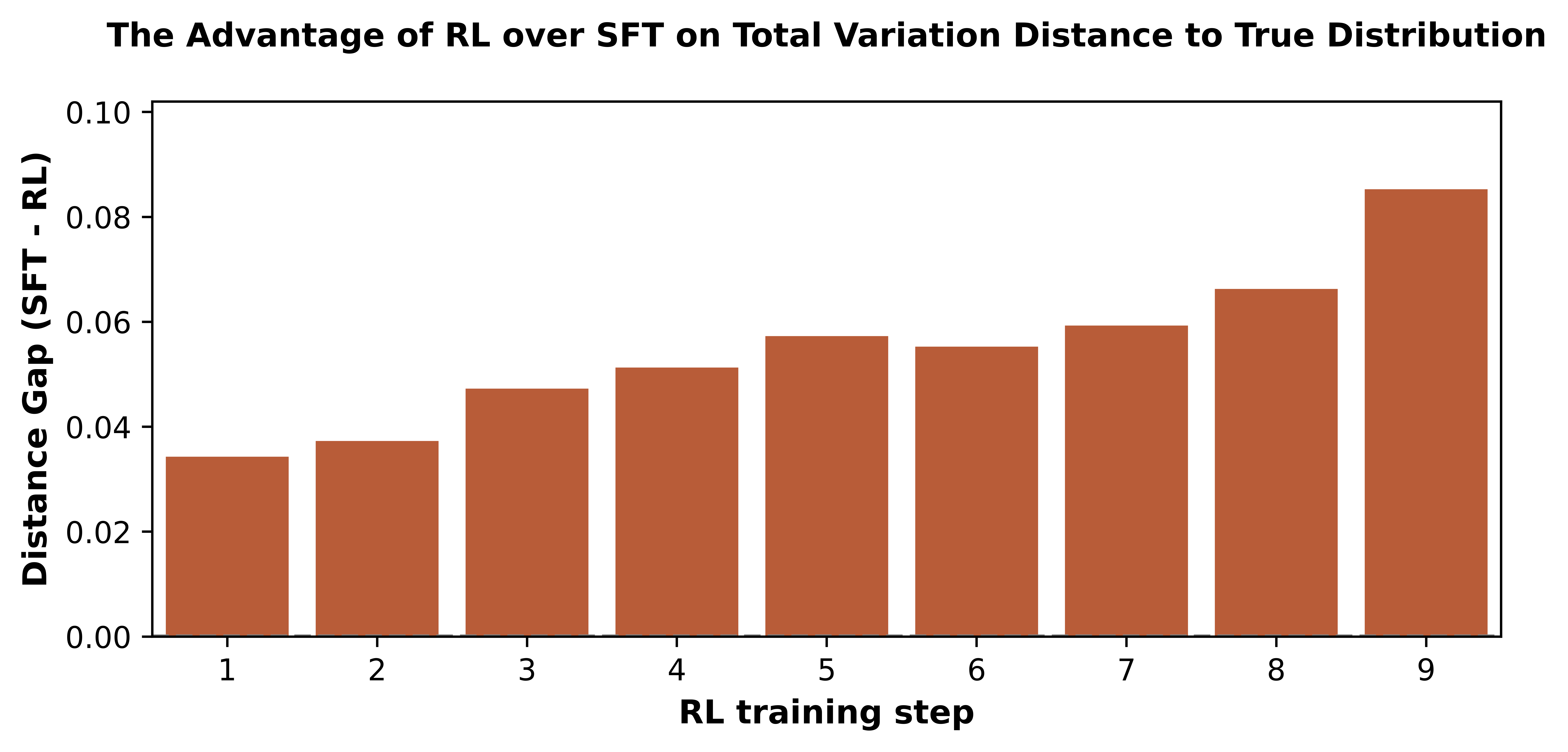}
\caption{
\textbf{Toy binary-output diagnostic for trace-support recovery.} In this controlled diagnostic, RL rollout feedback reduces the distance to the known target distribution relative to the SFT-only model.
}
\vspace{-0.4cm}
\label{fig:app_tv_diagnostic}
\end{figure}

The SFT training data is described below. The input sequence $X$ has a length of $S$, and each input character $x_i$ has a value range of $\{0, 1\}$. There are $N$ input sequences, $N < 2^{S}$. Each input character $x_i$ corresponds to an output sequence $Y_i$ of length $D$, with output characters having a value range of $\{0, 1\}$. Given the value of $x_i$, there are $M$ candidate values for $Y_i$, $M < 2^{D}$. The complete input sequence determines the value of the last character $Y_{\mathrm{end}}$ in the output sequence, with $Y_{\mathrm{end}}$ having a value range of $\{0, 1\}$. The complete output sequence $Y$ is formed by concatenating all $Y_i$ and then appending $Y_{\mathrm{end}}$. Therefore, the length of the complete output sequence $Y$ is $S D + 1$.

Based on the SFT-trained model, RL training is performed. RL training uses the same input as the SFT training phase. All $Y_i$ are explored by the model itself. The reward calculation logic is based on the $(S D + 1)$-th character output by the model. The details of the reward calculation are as follows. Obtain the probability distribution of the $(S D + 1)$-th character in the vocabulary space from the model's output. Observe the probability values of characters 0 and 1, select the one with the higher probability, and compare it with the correct answer. If they are the same, the reward is 1; otherwise, the reward is 0.

After RL training, proceed to the evaluation phase. The input used for evaluation is the same as that used in the training phase. Given an input, the distribution of the correct answer is $\Pr_{\mathrm{true}}$, the output distribution of the model trained by SFT is $\Pr_{\mathrm{SFT}}$, and the output distribution of the model trained by RL is $\Pr_{\mathrm{RL}}$. Note that we focus only on characters 0 and 1 in the vocabulary space. Calculate the distance between $\Pr_{\mathrm{true}}$ and $\Pr_{\mathrm{SFT}}$, and calculate the distance between $\Pr_{\mathrm{true}}$ and $\Pr_{\mathrm{RL}}$.

Specifically, the setting is: $S=4, N=192,D=2,M=2$. Figure~\ref{fig:app_tv_diagnostic} shows that, in this toy diagnostic, the RL-trained model's output distribution is closer to the known target distribution than the SFT-trained model's output distribution.

\section{Full Experimental Setup}
\label{sec:app_full_exp_setup}

This section gives the full experimental setup in Table \ref{tab:full_exp_setup}.

\begin{table*}[t]
  \centering
  \small
  \caption{\textbf{Controlled-experiment setup.} Main-text experiments use the same prompt family for SFT and RL while changing which atomic modules and compositions appear in each stage.}
  \label{tab:full_exp_setup}
  \setlength{\tabcolsep}{4pt}
  \resizebox{\textwidth}{!}{
  \begin{tabular}{ll}
    \toprule
    \textbf{Item} & \textbf{Setting} \\
    \midrule
    Base model & Llama-3.1-8B-Instruct~\citep{Dubey2024TheL3} \\
    Task family & Synthetic string transformations with 24 atomic skills from~\cite{yuan2025llms} and 10 routing mechanisms \\
    SFT data & Correct reasoning traces; ten sampled reasoning-inclusive responses per problem, retaining correct final answers \\
    SFT training & 2 epochs, learning rate $2\times10^{-5}$, batch size 128 \\
    RL reward & Final string exact-match reward; no step-level labels are used \\
    RL rollouts & Multiple rollouts per prompt, retaining both successful and unsuccessful samples for reward-based updates \\
    Default RL settings & Learning rate $1\times10^{-6}$, maximum length 8192, temperature 1.0, rollout batch size 16 \\
    Regularization and filtering & KL coefficient 0, entropy coefficient 0, filtering prompts whose rollouts are all correct or all incorrect \\
    Evaluation & Accuracy on seen compositions, unseen compositions, and depth extrapolation across composition levels \\
    \bottomrule
  \end{tabular}
  }
\end{table*}

\section{Details of Experiments Related to Real-Model Evidence}
\label{sec:exp_real_models_app}

\begin{figure}[t]
\centering
\includegraphics[width=1.0\linewidth]{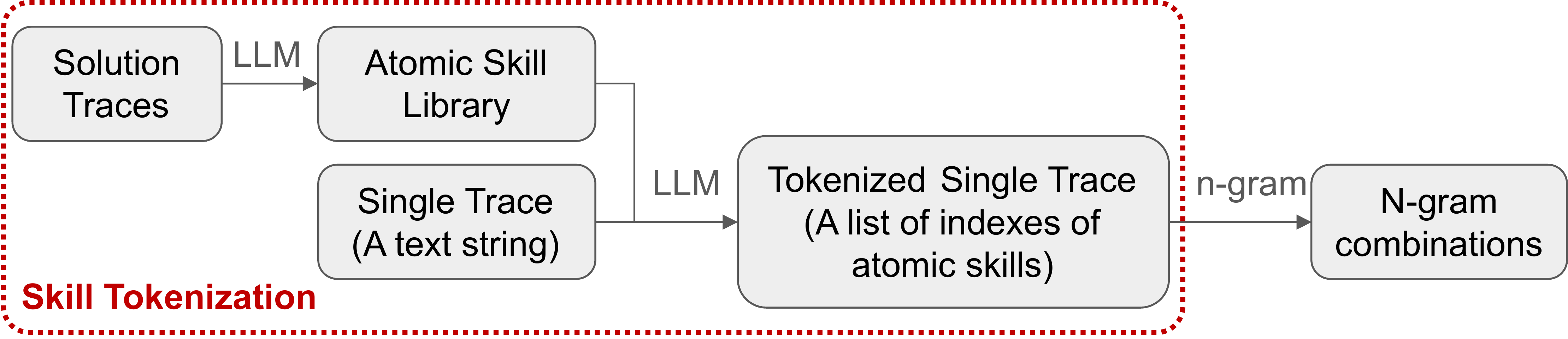}
\caption{
\textbf{Skill tokenization pipeline.} 
This pipeline maps solution traces to sequences of atomic skill tokens and derives n-gram combinations.
}
\label{fig:app_real_model_pipeline}
\end{figure}


This section provides the full pipeline for the real-model evidence summarized in Section~\ref{sec:exp_real_models}.

First, we collect solution traces from real SFT and RL models. Specifically, we sample examples from established mathematics benchmarks, including MATH-500~\cite{lightman2023prm}, AIME 2024~\cite{aime2024dataset}, AIME 2025~\cite{aime2025dataset}, AMC~\cite{amc2023dataset}, and GSM8K~\cite{cobbe2021training}, yielding 643 data points in total. We then use Qwen3-4B~\cite{yang2025qwen3} and Qwen3-4B-Thinking-2507 to generate solution traces for each example.

Next, as illustrated in Figure~\ref{fig:app_real_model_pipeline}, we construct a skill tokenization method for solution traces produced by real post-trained LLMs. Specifically, we use a strong LLM to summarize a library of atomic mathematical skills from the full set of traces. We assign each atomic skill in this library a unique index (a positive integer). With the assistance of the same LLM, we tokenize each solution trace (a natural-language text string) into a sequence of indices corresponding to the atomic skills it contains.

Based on the tokenized traces, we apply an $n$-gram algorithm to extract $n$-grams ($n=2,3,\ldots$). The resulting frequency statistics provide, for each $n$-gram, its occurrence frequency under both the SFT model (Qwen3-4B) and the RL model (Qwen3-4B-Thinking-2507).

\begin{figure}[ht]
\centering
\includegraphics[width=1.0\linewidth]{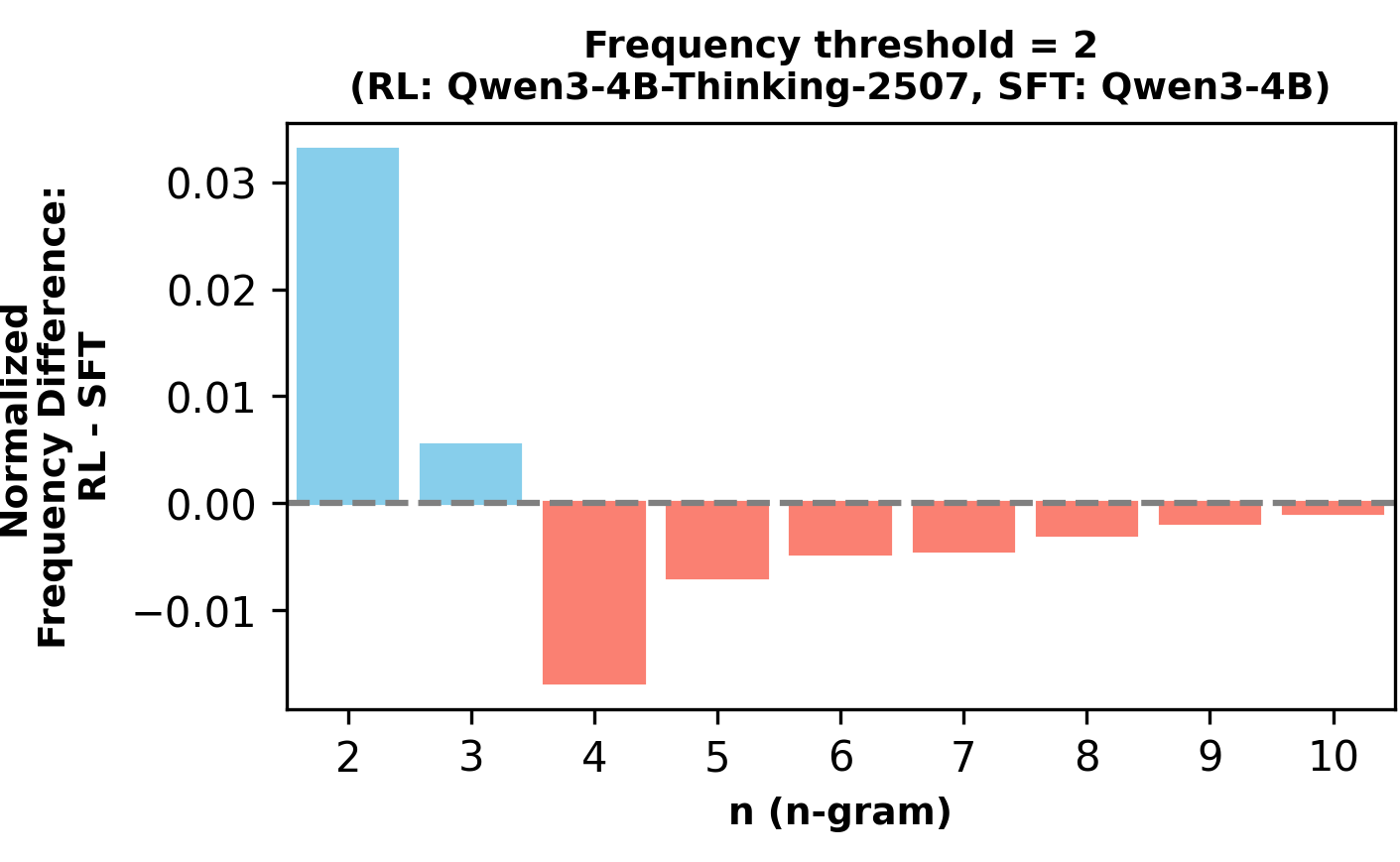}
\caption{
\textbf{Normalized n-gram frequency difference between the RL and SFT model.} Positive values indicate higher relative frequency in the RL model (RL - SFT), whereas negative values indicate higher relative frequency in the SFT model.
}
\label{fig:app_real_model_histogram}
\end{figure}

We visualize the differences in $n$-gram frequencies between the two models in Figure~\ref{fig:app_real_model_histogram}. The results indicate that:
\begin{itemize}
    \item The SFT-trained model has relatively higher frequency on longer skill sequences, consistent with reuse of more fixed trace templates.
    \item The RL-trained model has relatively higher frequency on shorter skill combinations, consistent with local recombination of reusable modules.
\end{itemize}

\begin{figure}[!t]
\centering
\includegraphics[width=1.0\linewidth]{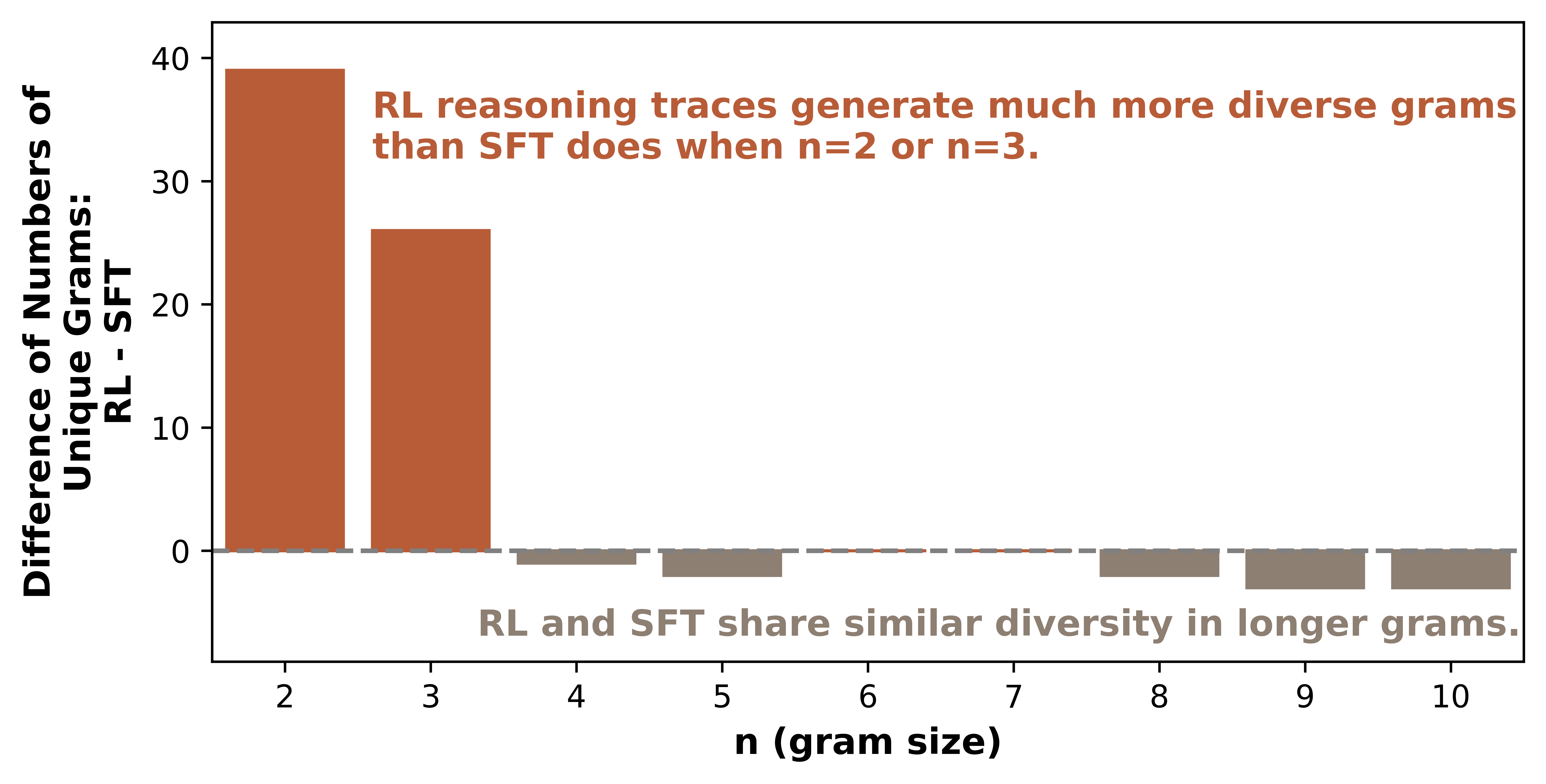}
\caption{
\textbf{Difference of numbers of unique grams between the RL and SFT model.} We calculate the number of unique grams for both RL and SFT models, then visualize their differences here.
}
\vspace{-0.4cm}
\label{fig:app_n_gram_diversity}
\end{figure}

Figure~\ref{fig:app_n_gram_diversity} compares the number of unique $n$-grams produced by the two models.

\begin{itemize}
    \item RL reasoning traces generate substantially more diverse $n$-grams when $n=2$ and $n=3$.
    \item RL and SFT reasoning traces have similar diversity for longer $n$-grams.
\end{itemize}

The short-$n$-gram difference is the key diagnostic: it shows that RL expands the set of local skill combinations available for later recombination.
Similar diversity for longer $n$-grams does not contradict the depth-extrapolation gains in the controlled experiments, because the proposed mechanism depends on recovering local modules and interfaces rather than enumerating every long trace template.
Overall, these observations support a reusable library of atomic modules rather than reliance on a single long template.

\section{Recovery Effect of Re-injection Depends on the Training Stage}
\label{sec:exp_complement_learn_a_skill_or_its_comp_app}

\begin{figure}[t]
\centering
\includegraphics[width=1.0\linewidth]{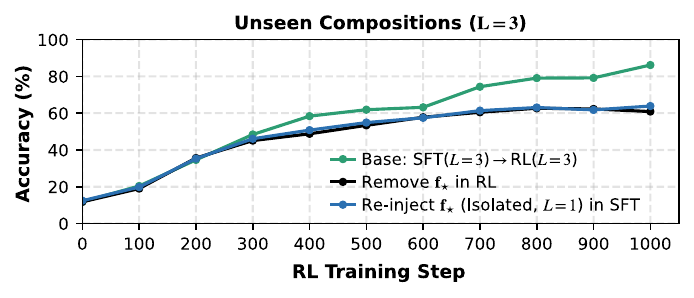}
\caption{
\textbf{Recovery effect of re-injecting $\mathbf{f_{\star}}$ ($L=1$) in the SFT stage}. We compare the SFT$\rightarrow$RL baseline with variants that remove or re-inject $\mathbf{f}_\star$.
}
\label{fig:app_exp_complement_learn_a_skill_or_its_comp}
\end{figure}


In Figure~\ref{fig:learn_a_skill_or_its_comp}, all settings share the same SFT model, which is trained on data with depth $L=3$. Under the base RL configuration ($L=3$), we first remove from the RL training corpus all composed instances that involve $\mathbf{f_{\star}}$. We then re-inject $\mathbf{f_{\star}}$ (Isolated, $L=1$) by augmenting the remaining RL data with isolated examples of $\mathbf{f_{\star}}$. This intervention leads to a measurable, albeit partial, recovery in performance.

In addition to Figure~\ref{fig:learn_a_skill_or_its_comp}, we perform a complementary experiment to assess whether the recovery effect depends on the training stage at which $\mathbf{f_{\star}}$ is reintroduced. Concretely, we apply the same re-injection strategy—adding $\mathbf{f_{\star}}$ (Isolated, $L=1$)—to the SFT dataset rather than to the RL dataset. The corresponding results are reported in Figure~\ref{fig:app_exp_complement_learn_a_skill_or_its_comp}. In contrast to RL-stage re-injection, incorporating $\mathbf{f_{\star}}$ into SFT yields negligible improvement, indicating that it does not effectively restore the lost performance. Overall, these results suggest that the efficacy of re-injecting new information is highly stage-dependent, with substantially stronger effects when the re-injection is performed during RL than during SFT.

\section{Training Details}
\label{sec:training_details_app}

This section gives a description of the synthetic
data-generating process (DGP), and the SFT/RL training protocol. The goal is to make the source of
post-training data explicit: the latent variables in the DGP are exactly the
atomic skill choices and routing choices that our theory treats as reusable
modules.

\paragraph{Atomic module library.}
We use a controlled string-transformation environment. Let
$\mathcal{F}=\{f_1,\ldots,f_{24}\}$ denotes the set of deterministic string
transformation functions reused from \citep{yuan2025llms}. Each $f_i$ maps a
string, or a fixed finite tuple of strings/auxiliary arguments, to a string.
Whenever a transformation has auxiliary arguments, we treat the instantiated
transformation as the atomic skill used in the composition. These functions
serve as \emph{atomic skills}. We additionally define a routing library
$\mathcal{R}$ of 10 deterministic input-construction mechanisms. A router
specifies which previous intermediate value(s) are supplied to the next skill
and how they are combined. For example, at the third step,
\[
\rho^{(3)}_1(y_1,y_2)=y_1,\qquad
\rho^{(3)}_2(y_1,y_2)=\operatorname{concat}(y_1,y_2),
\]
represent two different routing mechanisms: the next skill receives either the
first intermediate value alone or the concatenation of the first two
intermediate values.

\paragraph{Data-generating process.}
A compositional instance is generated from a \emph{composition signature}
\[
\sigma=\bigl((i_1,\rho_1),\ldots,(i_L,\rho_L)\bigr),
\]
where $L$ is the composition depth, $i_t\in\{1,\ldots,24\}$ indexes the skill
used at step $t$, and $\rho_t\in\mathcal{R}$ indexes the router used to form
the input to that skill. The level $L$ is therefore the number of atomic skill
applications in the trace. For each split $s\in\{\mathrm{SFT},\mathrm{RL},
\mathrm{eval}\}$, we define a split-specific support
$\mathcal{C}_s$ over composition signatures. This support is the object we
intervene on in the experiments: for example, we remove all signatures
containing a held-out skill or router, re-inject selected signatures, or impose
different relationships between $\mathcal{C}_{\mathrm{SFT}}$ and
$\mathcal{C}_{\mathrm{RL}}$.

Given a split $s$, a data point is generated as follows.
First, sample a composition signature
$\sigma\sim\mathrm{Unif}(\mathcal{C}_s)$ and an input string $x$ of length
between 3 and 10. Then compute the intermediate strings recursively:
\[
\begin{aligned}
y_0=x, ~~u_t=\rho_t(x,y_0,y_1,\ldots,y_{t-1}),\\
y_t=f_{i_t}(u_t),\quad t=1,\ldots,L .
\end{aligned}
\]
The gold final answer is the deterministic string
\[
Y(\sigma,x)=y_L .
\]
The model prompt contains only the code-like composition expression, the
function identifiers, and the input string; it does \emph{not} contain the
function definitions during SFT, RL, or evaluation. Thus, for a signature
$\sigma$, the prompt has the form
\[
P_{\sigma,x}=\texttt{def main\_solution(x): return }e_\sigma(x),
\]
where $e_\sigma$ is the nested/routed expression induced by $\sigma$. The
target response is required to place the final string in a JSON field,
e.g.,
\[
\texttt{\{"output": }Y(\sigma,x)\texttt{\}} .
\]

This DGP makes the latent structure known to the experimenter but hidden from
the learner. The latent skill variables are the indices $i_t$; the latent
routing variables are the indices $\rho_t$; the observed trace $D$ is the
model's textual reasoning trajectory and final answer. Therefore, OOD
evaluation is performed by holding out \emph{composition signatures}, not
merely input strings. A test instance is OOD when its signature is absent from
both the SFT and RL supports, even though its constituent atomic skills and
routers may have appeared elsewhere.

\paragraph{SFT training.}
SFT uses only filtered correct reasoning traces. For each training prompt
$P_{\sigma,x}$, the model is supervised to reproduce one of the retained
reasoning-inclusive responses ending in the correct final JSON answer. The
function definitions are not included in the SFT prompt; therefore, the model
must rely on the function identifiers and the compositional expression rather
than in-context access to the implementation. Importantly, the latent
signature $\sigma$ is used only by the data generator and verifier. The SFT
objective does not directly label individual skill variables $i_t$ or routing
variables $\rho_t$.

\paragraph{RL training.}
RL starts from the SFT checkpoint and uses the same prompt format, again with
function definitions hidden. For each prompt, the current policy generates a
group of 16 rollouts. The verifier parses the final JSON answer and compares
it with the deterministic gold output $Y(\sigma,x)$. The reward is therefore
a final-answer exact-match signal only. No intermediate values, routers, skill
indices, or gold reasoning traces are provided to the RL objective. The
on-policy rollouts nevertheless expose the model to multiple trajectories for
the same prompt, and the correctness signal reinforces trajectories that
compose the latent skills and routers successfully.

\paragraph{How interventions are implemented.}
All controlled interventions are implemented by modifying the signature
supports before sampling input strings. For skill-removal experiments, we
remove from $\mathcal{C}_{\mathrm{RL}}$ every signature containing a held-out
skill $f_\star$. For router-removal experiments, we analogously remove every
signature containing the held-out router $\rho_\star$. Re-injection is
performed by adding back either isolated signatures, such as $L=1$ uses of
$f_\star$, or compound signatures in which the held-out atom appears together
with other skills and routers. For SFT--RL distribution experiments, we
construct $\mathcal{C}_{\mathrm{SFT}}$ and $\mathcal{C}_{\mathrm{RL}}$ to have
the prescribed subset, superset, overlap, or disjoint relationship, while
keeping the atomic inventory and optimization hyperparameters fixed.

\paragraph{Why this DGP is tied to the identifiability claims.}
The experiment separates three objects that are conflated in natural reasoning
data. First, the prompt descriptor $P_{\sigma,x}$ specifies the visible
composition expression and input string. Second, the latent signature
$\sigma$ specifies the unobserved sequence of skill and routing selections.
Third, the observed trace $D$ is the model-generated text. Because
$\sigma$ is known to the generator, we can control whether training provides
coverage of individual atoms, compound interfaces, and OOD recombinations.
Because $\sigma$ is not provided as a training label during RL, improvements
on held-out signatures cannot be explained by direct supervision of the
latent variables; they require the model to recover reusable skill and routing
behavior from traces and outcome rewards.

\section{Longer-Run Convergence Results}
\label{sec:exp_more_converged_status_app}

We continue the experiments from Figures~\ref{fig:learn_a_skill_or_its_comp} and~\ref{fig:sft_rl_overlap} to 1000 RL training steps. Figures~\ref{fig:app_exp_learn_a_skill_or_its_comp} and~\ref{fig:app_exp_sft_rl_overlap} show that the corresponding trends remain stable over longer training.

\begin{figure*}[t]
\centering
\includegraphics[width=1.0\linewidth]{figs/camera_ready/5_convergence.png}
\caption{
\textbf{Longer-run version of Figure~\ref{fig:learn_a_skill_or_its_comp}.} As training proceeds to 1000 RL steps, the curves stabilize and the relative ordering remains unchanged.
}
\vspace{-0cm}
\label{fig:app_exp_learn_a_skill_or_its_comp}
\end{figure*}

\begin{figure*}[t]
\centering
\includegraphics[width=1.0\linewidth]{figs/camera_ready/7_convergence.png}
\caption{
\textbf{Longer-run version of Figure~\ref{fig:sft_rl_overlap}.} The ordering between distribution-overlap regimes persists as training proceeds to 1000 RL steps.
}
\vspace{-0cm}
\label{fig:app_exp_sft_rl_overlap}
\end{figure*}

\section{A Concrete Algebra Example}
\label{sec:exp_algebra_example_app}

\begin{figure*}[!t]
\centering
\includegraphics[width=1.0\linewidth]{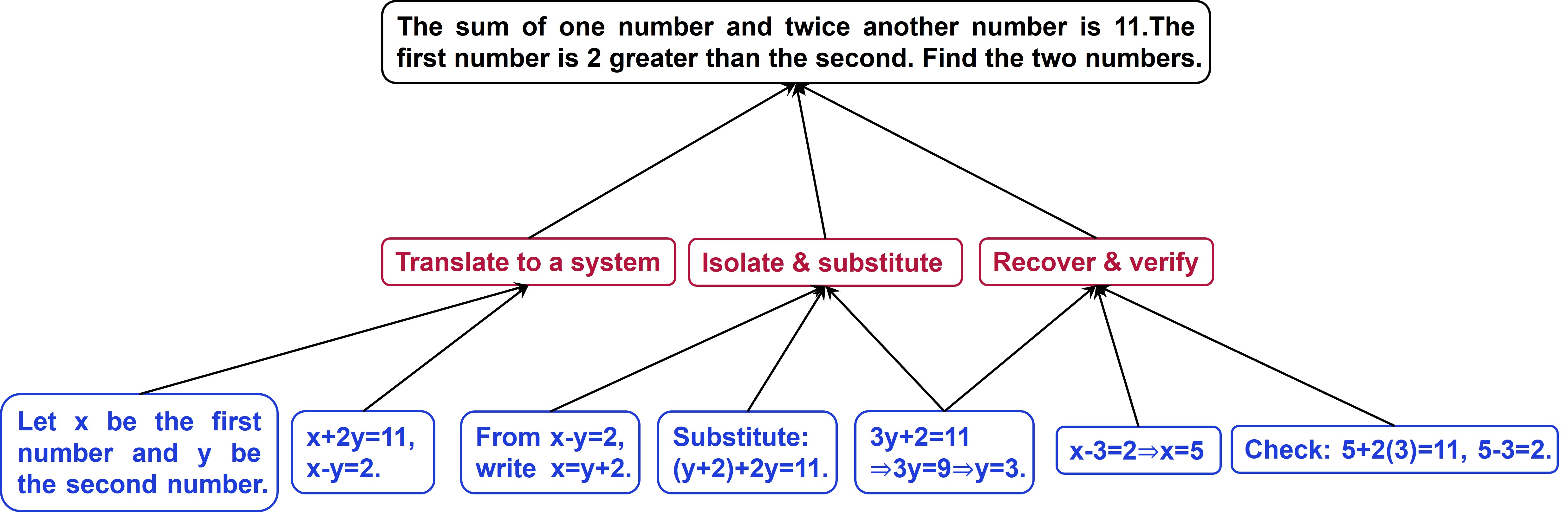}
\caption{
\textbf{A concrete algebra example illustrating skills, routing mechanisms,
and identification conditions
(\Cref{fig:causal_graph}, \Cref{cond:text_identification})}.
The \textbf{black box} is the problem descriptor~$\mP$;
the \textbf{\color{red}red boxes} are latent selection variables;
the \textbf{\color{blue}blue boxes} are observed trace tokens.
\textbf{Neighborhood coverage}
(\Cref{cond:text_identification}-\ref{asmp:neighbors}):
If ``Translate to a system'' only ever preceded
``Isolate \& substitute,'' it could be mistaken for one fused routine;
seeing it also before ``Eliminate by subtraction''
($\,(x+2y)-(x-y)=11-2 \Rightarrow 3y=9 \Rightarrow y=3$)
provides a distinct neighborhood that isolates it as reusable.
\textbf{Local distinguishability}
(\Cref{cond:text_identification}-\ref{asmp:neighbors},\ref{asmp:minimality}):
``Substitute: $(y+2)+2y=11$'' depends only on
``Isolate \& substitute'' (no other red box),
serving as its \emph{anchor};
choosing elimination instead would produce a visibly different token,
so distinct module states remain separable.
\textbf{Restricted choice sets}
(\Cref{cond:text_identification}-\ref{asmp:natural_selection}):
After ``Substitute: $(y+2)+2y=11$,'' the algebraically valid
continuation is ``$3y+2=11 \Rightarrow 3y=9 \Rightarrow y=3$'';
other token sequences would violate the problem's constraints,
so valid traces form a strict subset of all combinatorial possibilities.
}

\vspace{-0cm}
\label{fig:algebra_example_app}
\end{figure*}

In Figure~\ref{fig:algebra_example_app}, we provide a concrete algebra example to illustrate skills, routing mechanisms,
and identification conditions
(\Cref{fig:causal_graph}, \Cref{cond:text_identification}).

\section{Representation-Level Experiment with SAE}
\label{sec:exp_sae_representation_app}

\begin{figure*}[t]
\centering
\includegraphics[width=1.0\linewidth]{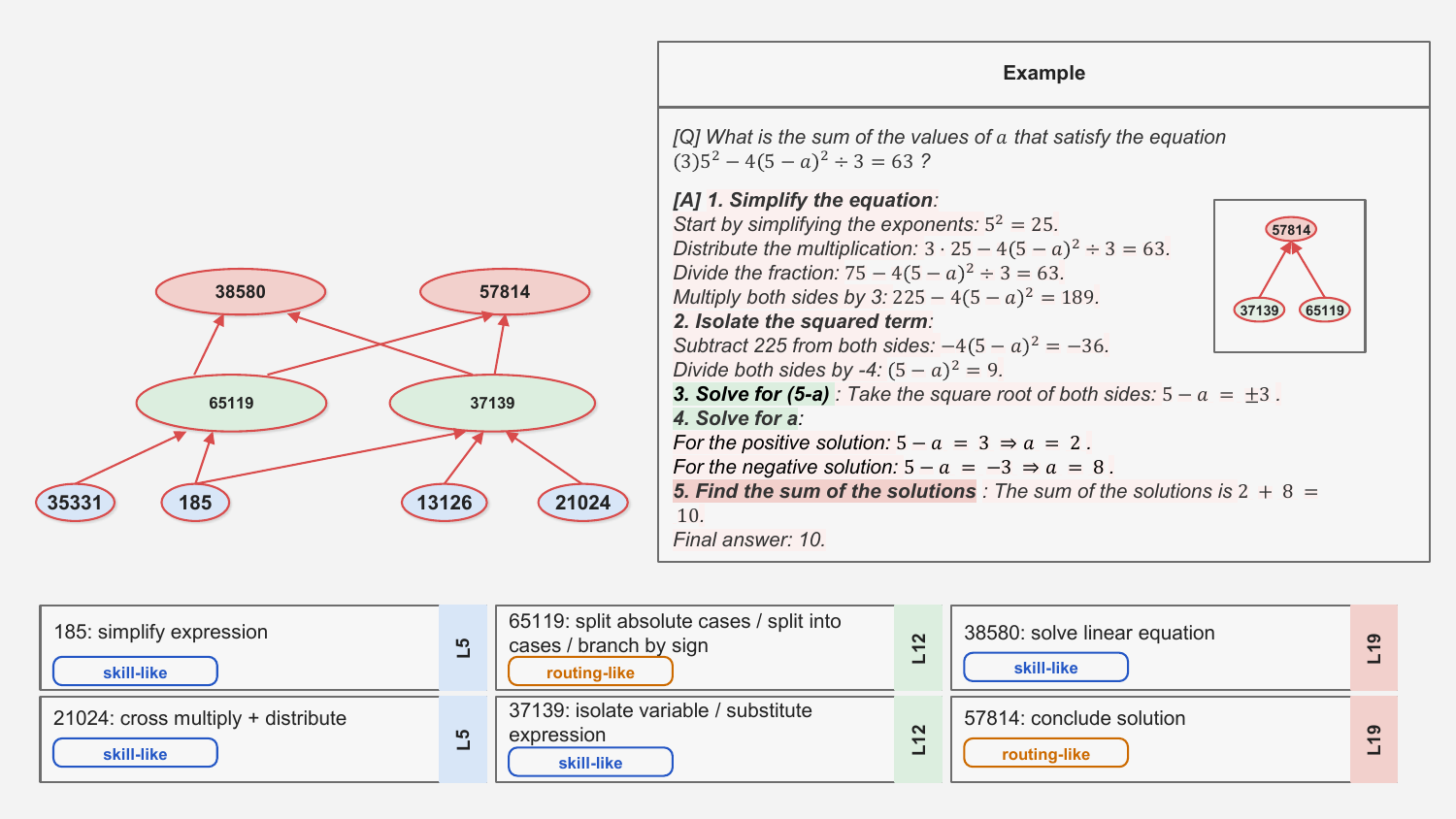}
\caption{
\textbf{SAE-based causal graph on mathematical reasoning traces.} Based on real-world math benchmarks, we apply pretrained SAEs to Gemma-2-2b-it, partition tokens by sequence position, and run causal discovery across SAE features. The graph contains skill-like nodes corresponding to mathematical operations and routing-like nodes corresponding to choices about which operation to apply given the current problem state.
}
\vspace{-0.4cm}
\label{fig:app_sae}
\end{figure*}

We applied pretrained SAEs \citep{gao2025scaling} to Gemma-2-2b-it and run SAE-based causal discovery on real-world math benchmarks. The results are presented in Figure \ref{fig:app_sae}. The mapping follows a principled criterion grounded in our theory: nodes whose activation patterns correspond to specific mathematical operations are classified as skill nodes -- they perform atomic computational steps. Nodes that govern which operation to apply given the current problem state (e.g., selecting which intermediate result to carry forward) are classified as routing nodes -- they control the composition structure. Crucially, this hierarchy is discovered from learned representations via causal discovery, not imposed a priori, confirming the theory's prediction that the skill-routing structure emerges in real model internals.

\section{Sensitivity Tests over RL Hyperparameters}
\label{sec:exp_sensitivity_tests_over_RL_hyperparameters_app}

\begin{figure*}[t]
\centering
\includegraphics[width=1.0\linewidth]{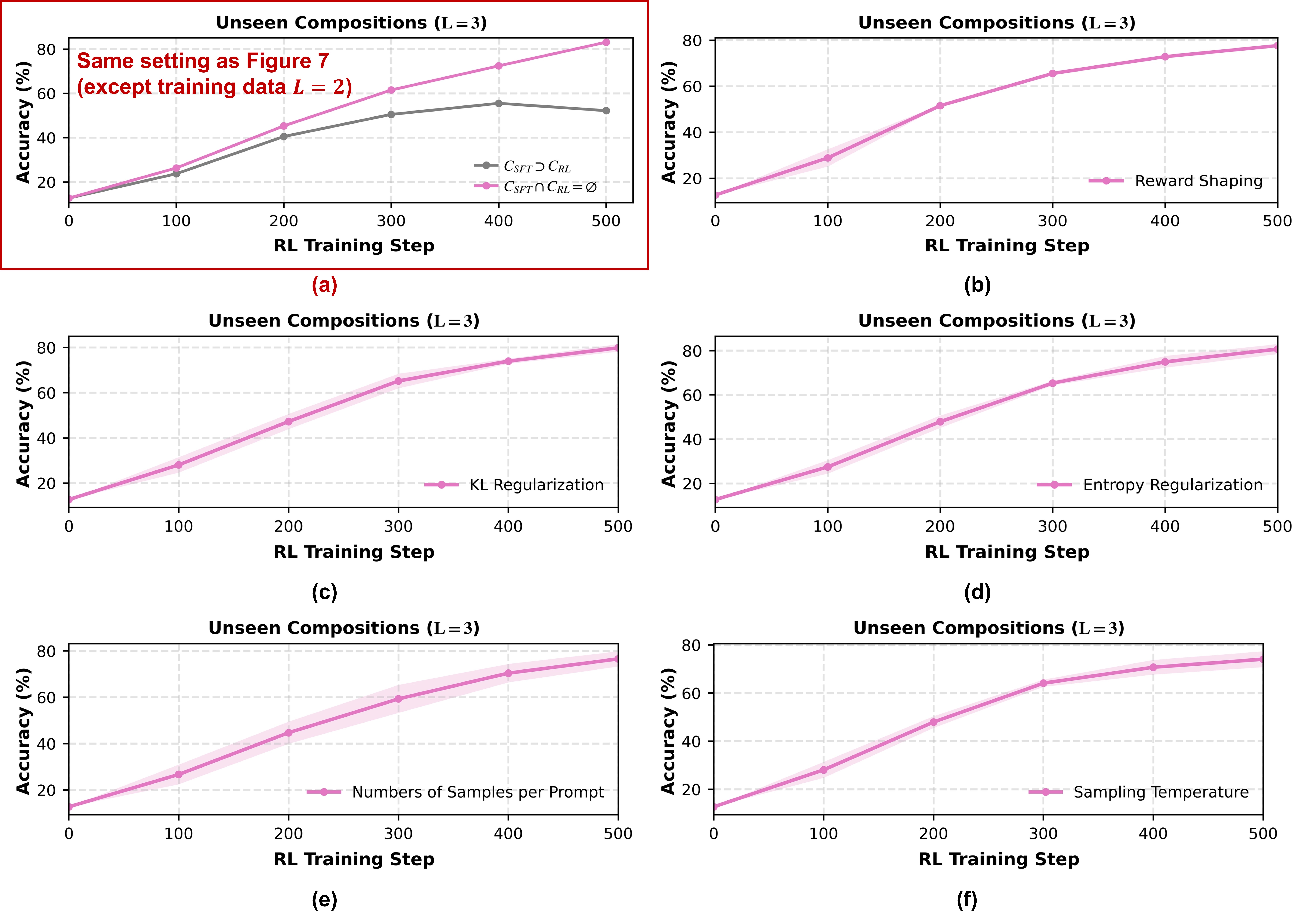}
\caption{
\textbf{Full sensitivity experiments.}
Panel (a) follows the setting of Figure~\ref{fig:sft_rl_overlap}, except that the training data use level $L=2$ rather than $L=3$. Panels (b)-(f) vary overlong reward shaping (true or false), KL regularization (coefficient 0, 1e-4, 1e-3), entropy regularization (coefficient 0, 5e-4, 1e-3, 2e-3), number of samples per prompt (8, 14, 16), and sampling temperature (0.5, 0.7, 1.0). In each sensitivity experiment, the solid line reports the average over hyperparameter values, and the shaded region shows variation across these runs. The qualitative conclusions remain stable across these RL algorithmic details.
}
\vspace{-0.4cm}
\label{fig:app_sensitivity_tests_over_RL_hyperparameters}
\end{figure*}

We complete the full sensitivity experiments (shown in Figure~\ref{fig:app_sensitivity_tests_over_RL_hyperparameters}), sweeping number of samples per prompt, sampling temperature, reward shaping variants, and KL/entropy regularization coefficients. Across all configurations, the findings remain consistent, confirming that our results are robust to RL algorithmic details.

%% file: sections/related_work.tex
\section{Related Work}
\label{sec:related_work}

\paragraph{Reasoning traces and adaptive computation.}
A growing literature studies how large language models generate, control, and benefit from intermediate reasoning. Early methods elicit explicit reasoning traces through scratchpads or auxiliary tokens \citep{nye2022show,goyal2023think,herel2024thinking}. Subsequent work trains models to insert or suppress such traces using reinforcement learning, enabling conditional computation and adaptive reasoning length \citep{zelikman2024quiet,fang2025thinkless,zhang2025adaptthink,xiang2025just}. Test-time control over reasoning has been explored through termination criteria and length prediction \citep{wu2025thought,eisenstadt2025overclocking}, selective reasoning for multimodal models \citep{wang2025think}, or external controllers \citep{wang2025adareasoneradaptivereasoningenables}. Latent approaches replace explicit chains with continuous or token-based latent variables, often trained via EM-style objectives \citep{sun2025enhancing,phan2023training,ruan2025reasoning}. At the same time, several works question whether the semantic content of reasoning traces is essential: \citet{stechly2025beyond} show that intermediate tokens need not be meaningful, while \citet{csordas2025language} find that model depth is often underutilized.

\paragraph{Uncertainty, entropy, and exploration.}
Uncertainty has emerged as a key signal for both training and inference-time reasoning. Entropy-aware objectives encourage exploration in RL-based reasoning \citep{agarwal2025unreasonable,cui2025entropy,cheng2025reasoning,wan2025adapthink}, with evidence that high-entropy or minority tokens disproportionately drive learning dynamics \citep{wang20258020rulehighentropyminority,wang2025beyond}. This motivates branching or backtracking at uncertain points \citep{zhu2025uncertainty,zheng2025first,hou2025treerl,lu2025retro,qin2025backtrack}, as well as temperature or confidence-based control \citep{zhu2024hot,li2025exploring}. Complementary work studies calibration and self-verification, predicting correctness from intermediate representations or confidence signals \citep{zhang2025reasoning,yoon2025reasoning,jang2025verbalized}. These methods improve exploration and reliability, but do not address when latent reasoning structure becomes identifiable.

\paragraph{Interpretable structure and latent representations.}
Mechanistic interpretability approaches aim to uncover internal reasoning structure. Sparse autoencoders and related factorization methods extract interpretable features or steering vectors \citep{wang2025resatransparentreasoningmodels,venhoff2025understanding}, but face challenges from polysemanticity and correlated features \citep{chanin2025feature,deng2025sparse}. Recent work proposes hierarchical or multi-level sparse architectures and projection-based discovery methods \citep{balagansky2025train,muchane2025incorporating,costa2025flat}, with emerging identifiability guarantees \citep{chen2025tamingpolysemanticityllmsprovable,cui2026on}. These efforts build on classical results in nonnegative matrix factorization and nonnegative rank \citep{cohen1993nonnegative}. Other approaches extract prototypes or latent steering directions by contrasting reasoning and non-reasoning paths \citep{he2025protoreasoning,liu2025fractional}. In contrast, we study how training dynamics themselves enable recovery of latent compositional structure.

\paragraph{Supervised fine-tuning versus reinforcement learning.}
A central question is why reinforcement learning (RL) often generalizes better than supervised fine-tuning (SFT) for reasoning. Empirically, SFT tends to memorize reasoning traces, while RL improves length generalization and transfer \citep{chu2025sft,huan2025does,tsilivis2025reinforcement}. Several works unify or interpolate between SFT and RL objectives \citep{liu2025uftunifyingsupervisedreinforcement,lv2025towards,liu2025superrl}, analyze their implicit equivalence \citep{anil2025rejection,swamy2025all}, or identify coverage as the key determinant of RL success \citep{chen2025coverage,zhang2025interplay}. Others show that RL composes atomic skills learned during SFT \citep{cheng2025atomic,yuan2025llms}, while distillation-based approaches tend to learn flatter reasoning structures \citep{xu2026rlkddistillingllmsreasoning}. 

\paragraph{Positioning.}
Most prior work focuses on eliciting longer reasoning traces, improving exploration, or interpreting trained models. In contrast, we formalize reasoning as hierarchical generation with discrete latent selections and show that RL induces coverage over these latent choices. This yields identifiability conditions for recovering compositional structure and explains why RL enables out-of-distribution generalization beyond what supervised objectives alone can achieve.